\newcites{append}{Supplement References}
\newtheorem{prop}{Proposition}
\newtheorem{theorem}{Theorem}
\begin{document}
\title{Generative Modeling by Inclusive Neural Random Fields with Applications in Image Generation and Anomaly Detection}

%

\author{
	Yunfu~Song and
	Zhijian~Ou,~\IEEEmembership{Senior Member,~IEEE}
\IEEEcompsocitemizethanks{\IEEEcompsocthanksitem Yunfu Song and Zhijian Ou are with the Department of Electronic Engineering, Tsinghua university, Beijing, China.
	Email: ozj@tsinghua.edu.cn}
\thanks{Manuscript received XX, 2019. Corresponding author: Zhijian Ou.}
}

\maketitle
\begin{abstract}
Neural random fields (NRFs), referring to a class of generative models that use neural networks to implement potential functions in random fields (a.k.a. energy-based models), are not new but receive less attention with slow progress. Different from various directed graphical models such as generative adversarial networks (GANs), NRFs provide an interesting family of undirected graphical models for generative modeling.
In this paper we propose a new approach, the inclusive-NRF approach, to learning NRFs for continuous data (e.g. images), by introducing inclusive-divergence minimized auxiliary generators and developing stochastic gradient sampling in an augmented space.
Based on the new approach, specific inclusive-NRF models are developed and thoroughly evaluated in two important generative modeling applications - image generation and anomaly detection. 
The proposed models consistently improve over state-of-the-art results in both applications.
Remarkably, in addition to superior sample generation, one additional benefit of our inclusive-NRF approach is that, unlike GANs, it can directly provide (unnormalized) density estimate for sample evaluation.
With these contributions and results, this paper significantly advances the learning and applications of NRFs to a new level, both theoretically and empirically, which have never been obtained before.

\end{abstract}

\begin{IEEEkeywords}
	Deep generative models, Random fields, Image generation, Anomaly detection.
\end{IEEEkeywords}

\section{Introduction}
	
\IEEEPARstart{G}{enerative} modeling is a way of statistical modeling that describes a joint probability distribution over target variables and has many applications in practice.
Recently, significant progress has been made on learning with deep generative models (DGMs), which usually refer to probabilistic models defined using multiple-layer neural networks (NNs).
There have emerged a bundle of deep directed generative models, such as variational AutoEncoders (VAEs) \cite{kingma2014auto-encoding}, generative adversarial networks (GANs) \cite{goodfellow2014generative,nowozin2016f-gan} and so on. 
In contrast, deep undirected generative models received less attention with slow progress in both learning algorithms and applications.

An undirected generative model, also known as random field (RF) \cite{koller2009probabilistic}, defines a probability distribution over variables $x$ in the form of $p_{\theta}(x)=\frac{1}{Z(\theta)} \exp\left[  u_{\theta}(x) \right]$, where $u_\theta(x)$ is  called the potential function with parameter $\theta$, and $Z(\theta)=\int\exp\left[  u_{\theta}(x) \right] dx$ is the normalizing constant.
In early days, the potential function $u_\theta(x)$ was often defined as linear functions, e.g. $u_\theta(x)=\theta^T f(x)$, where $f(x)$ is a vector of features (usually hand-crafted) and $\theta$ is the corresponding parameter vector. Such RFs are known as log-linear models \cite{koller2009probabilistic}.
Recently, there have been a number of studies on developing deep undirected generative models, which are characterized by employing multiple-layer neural networks to define the potential function $u_\theta(x)$.
These models appeared in different contexts with different specific model definitions, such as deep energy models (DEMs) \cite{ng11,Kim2016DeepDG}, descriptive models \cite{coopnets}, generative ConvNet \cite{wyn15}, neural random field language models \cite{asru}.
For ease of reference, we call such class of models as neural random fields (NRFs)\footnote{DEMs may refer to a broader class of models, allowing for non-probabilistic models \cite{lecun2006tutorial}.
	We use the termininology of NRFs to emphasize that we are mainly concerned with probabilistic models since divergence measures between probability distributions are used in our algorithmic development.
Also it should be emphasized that the neural random fields defined above, which basically are generative models, should not be confused with neural conditional random fields (CRFs) \cite{artieres2010neural,zheng2015conditional,hu2019neural}, which basically are discriminative models and  used for sequence labeling \cite{artieres2010neural,hu2019neural}, image segmentation \cite{zheng2015conditional} and so on.
} in general.

Conceptually, compared to log-linear RFs, if we could successful train such NRFs, we can jointly learn the features and the feature weights for generative modeling, which is highly desirable. However, learning NRFs presents much greater challenge, because the log-likelihood in NRFs is no longer concave (unlike in log-linear RFs) and the gradient involves the expectation with respect to (w.r.t.) the model distribution $p_\theta$.
Typically, one approximates the model expectation by Markov chain Monte Carlo (MCMC) sampling from $p_\theta$ to calculate stochastic gradients as in stochastic maximum likelihood (SML) \cite{younes1989parametric}.
Monte Carlo sampler thus is a crucial component which affects the learning of NRFs.

A recent progress in learning NRFs for generative modeling as studied in \cite{Kim2016DeepDG,coopnets,asru,Kuleshov2017NeuralVI} is to pair the target random field $p_\theta$ with an auxiliary directed generative model (often called generator) $q_\phi(x)$ parameterized by $\phi$, which approximates sampling from the target random field.
Learning is performed by maximizing the log-likelihood of training data under $p_\theta$ or some bound of the log-likelihood, and simultaneously minimizing some divergence between the target random field $p_\theta$ and the auxiliary generator $q_\phi$.
Different learning methods mainly differ in the objective functions used in the joint training of $p_\theta$ and $q_\phi$, and thus have different computational and statistical properties (partly illustrated in Figure \ref{fig:toy}).
For example, minimizing the exclusive-divergence $KL[q_\phi||p_\theta] \triangleq \int q_\phi \log \left( q_\phi / p_\theta \right) = - H \left[ q_\phi \right] - \int q_\phi \log p_\theta$ w.r.t. $\phi$, as employed in \cite{Kim2016DeepDG}, involves the intractable entropy term $H \left[ q_\phi \right]$ and tends to enforce the generator to seek modes, yielding missing modes.
Notably, there are also other factors that distinguish different studies in learning NRFs, e.g. modeling discrete or continuous data, different model choices of the target NRF and the auxiliary generator.

In this paper, we aim to advance the generative learning and applications of neural random fields for continuous data (e.g. images), 
by introducing inclusive-divergence minimized auxiliary generators $q_\phi$ and developing stochastic gradient sampling in an augmented space defined by both the target NRF and the auxiliary generator.
Minimizing the inclusive-divergence $KL[p_\theta||q_\phi] \triangleq \int p_\theta \log \left( p_\theta / q_\phi \right)$ w.r.t. $\phi$ along with stochastic gradient sampling in the augmented space offers several benefits.
First, inclusive minimization can avoid the annoying entropy term, which is suffered by minimizing the exclusive-divergence.
Second, inclusive minimization also tends to drive the auxiliary generator (acting like an adaptive proposal in adaptive MCMC \cite{andrieu2008tutorial,roberts2009examples}) to cover modes of the target density $p_\theta$, which is a desirable property for proposal design in MCMC.
Third, sampling in an augmented space could be more efficient, inspired from auxiliary variable MCMC \cite{neal2011mcmc}.
Therefore, our stochastic gradient sampler defined by both the target NRF and the auxiliary generator embodies both auxiliary variable MCMC and adaptive MCMC, which conceptually can realize more efficient sampling and thus help learning of NRFs.

To demonstrate the capability of NRFs as powerful generative models in applications, once being properly trained, two applications are examined. 
First, sample generation, e.g. image generation, is explored and evaluated as a basic application of various generative models for continuous data.
Second, note that a fundamental benefit of generative modeling by NRFs is that, unlike GANs and VAEs, NRFs can directly provides (unnormalized) density estimate.
Regarding this benefit, an interesting application of NRFs is anomaly detection, which basically can be addressed by density estimate --- anomalies are those ones residing in low probability density areas.
The unnormalized density estimates (as measured by potential values) provide a natural decision criterion for anomaly detection, since the normalizing constant only introduces a constant in thresholding.

The contributions of this work are two-fold.
First, we successfully develop the inclusive-NRF approach, which consists of proper designs of the target NRF model, the auxiliary generator and the sampler, as a whole, to learn NRFs for continuous data, in the framework of minimizing the inclusive-divergence between the target NRF and the auxiliary generator.
The differentiation and connection between our approach and prior work are summarized as follows, which are detailed in Section \ref{sec:related-work}.
\begin{itemize}
\item To our knowledge, the inclusive-NRF approach is the first in learning NRFs by minimizing the inclusive-divergence between the target NRF and the auxiliary generator for continuous data.

\item Particularly for continuous data (e.g. images), we develop stochastic gradient samplers in an augmented space, including but not limited to SGLD (stochastic gradient Langevin dynamics) \cite{sgld} and SGHMC (stochastic gradient Hamiltonian Monte Carlo) \cite{sghmc}, to exploit noisy gradients for NRF model sampling.
Notably, SGHMC improves over SGLD in learning NRFs.
Our SGLD/SGHMC sampler is not like in previous applications (\cite{sgld,sghmc}) which mainly simulate Bayesian posterior samples in large-scale Bayesian inference, though we use the same terminology.

\item Minor points: The target NRFs used in this work are different from those in \cite{Kim2016DeepDG,asru,coopnets}, though we use  latent-variable auxiliary generators, similar to \cite{coopnets}.
\end{itemize}

Second, we demonstrate the superior capability of the inclusive-NRF approach in applications of image generation and anomaly detection.
Comparisons between different DGMs are made with comparable network architectures.
It is found that the inclusive-NRFs consistently improve over state-of-the-art results in both applications, which are summarized as follows.
We will release our code and scripts for reproducing the results in this work.
\begin{itemize}
	\item 
	Inclusive-NRFs achieve state-of-the-art sample generation quality, measured by both Inception Score (IS) and Frechet Inception Distance (FID), obtaining IS 8.28 (FID 20.9) with Resnet architecture \cite{resnet} and {IS 7.54 (FID 27.9)} with CNN architecture \cite{cnn} on CIFAR-10.
	\item 
	By directly using the potential function for sample evaluation, inclusive-NRFs achieve state-of-the-art performance in anomaly detection on the widely benchmarked datasets - KDDCUP, MNIST, and CIFAR-10. 
	This shows that, unlike GANs, the new approach can provide informative density estimate, besides superior sample generation.
\end{itemize}

The remainder of this paper is organized as follows. 
After presenting background on random fields in Section \ref{sec:background}, we introduce the inclusive-NRF approach in Section \ref{sec:inclusive-NRF}.
In Section \ref{sec:related-work}, we discuss related work.
The extensive experimental evaluations are given in Sections  \ref{sec:experiments} .
We conclude the paper with a discussion in Section \ref{sec:discussion}.

\begin{figure*}[htb]
	\center
	\includegraphics[width=0.6\textwidth]{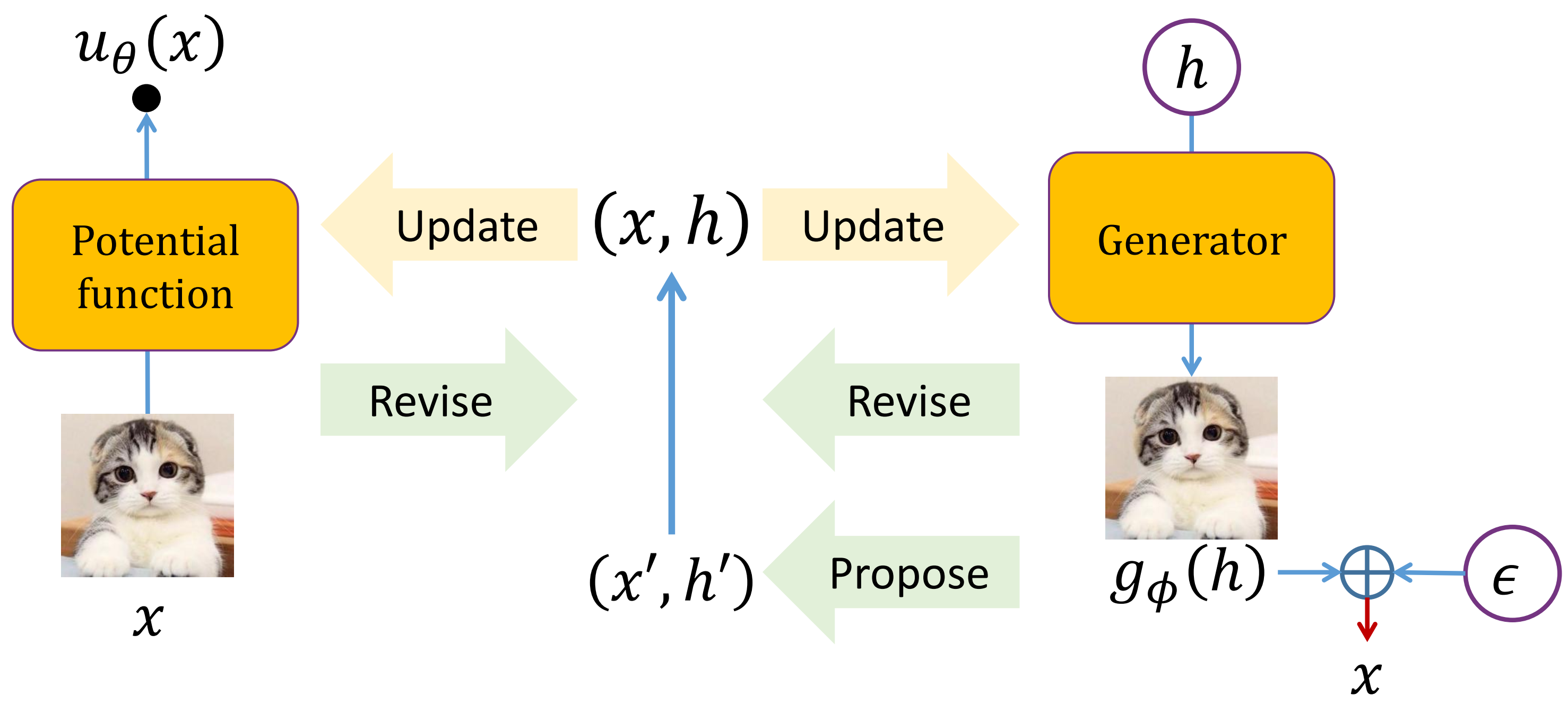}
	\caption{Overview of the inclusive-NRF approach. Two neural networks are used to define the NRF's potential function $u_{\theta}(x)$ and the auxiliary generator $g_\phi(h)$ respectively. The parameters of both networks, $\theta$ and $\phi$, are updated by using the revised samples $(x,h)$ in the augmented space, which are obtained by revising the samples $(x',h')$ proposed by the auxiliary generator, according to the stochastic gradients defined by both the target NRF and the auxiliary generator.} 
	\label{fig:inclusive-NRF}
\end{figure*}

\section{Background on Random fields} \label{sec:background}

Undirected models, or exchangeably termed as random fields form one of the two main classes of probabilistic graphical models \cite{frey2005comparison,koller2009probabilistic}.
In defining the joint distribution, directed models use conditional probability functions, with the directionality given by the conditioning relationship, whereas undirected models use unnormalized potential functions and are more suitable for capturing interactions among variables, especially when the directionality of a relationship cannot be clearly defined (e.g. as in between neighboring image pixels).

A random field (RF) defines a probability distribution for a collection of random variables $x\in \mathbb{R}^{d_x}$ with parameter $\theta$ in the form:
\begin{equation}\label{eq:unsup-RF}
p_{\theta}(x)=\frac{1}{Z(\theta)} \exp\left[  u_{\theta}(x) \right] 
\end{equation}
where $Z(\theta)=\int\exp\left[  u_{\theta}(x) \right] dx$ is the normalizing constant, $u_{\theta}(x)$ is called the potential function\footnote{Negating the potential function defines the energy function.} which assigns a scalar value to each configuration of $x$.
High probability configurations correspond to high potential/low energy configurations.


There is a large body of literatures devoted to learning random fields for generative modeling, among which the most acknowledged method is maximum likelihood (ML) learning and its variants.
The primary difficulty is that the gradient in maximizing the data log-likelihood $\log p_\theta(\tilde{x})$ for observed $\tilde{x}$ involves expectation w.r.t. the model distribution, as shown below:
\begin{equation} \label{eq:RF-grad}
\begin{aligned}
\nabla_\theta \log{p}_{\theta}(\tilde{x})&=\nabla_\theta u_{\theta}(\tilde{x})-\nabla_\theta \log Z(\theta)\\
&=\nabla_\theta u_{\theta}(\tilde{x})-E_{p_\theta(x)}\left[\nabla_\theta u_{\theta}(x)\right].
\end{aligned}
\end{equation}
Prior efforts to address this difficulty are further discussed and compared to our approach in Section \ref{sec:related-work}.

\section{The inclusive-NRF approach}
\label{sec:inclusive-NRF}

A high-level overview of our inclusive-NRF approach is shown in Figure \ref{fig:inclusive-NRF}. 
In the following, after introducing the NRF model (Section \ref{sec:NRF-model}), the two new designs - introducing the inclusive-divergence minimized auxiliary generator and developing stochastic gradient sampling are elaborated in Section \ref{sec:introduce-aux-generator} and Section \ref{sec:apply-SGLD} respectively.

\subsection{The NRF model} \label{sec:NRF-model}
To define a NRF over variables $x$, we implement the potential $u_{\theta}(x) : \mathbb{R}^{d_x} \rightarrow \mathbb{R}$, by a neural network, which takes the multi-dimensional $x \in \mathbb{R}^{d_x}$ as input and outputting the scalar $u_{\theta}(x) \in \mathbb{R}$. In this manner, we can take advantage of the representation power of neural networks for RF modeling. And such a RF essentially becomes defined over a fully-connected undirected graph and captures interactions in observations to the largest order, since the neural potential function $u_{\theta}(x)$ involves all the components in $x$.
Remarkably, the NRFs used in our experiments are different from similar models in previous studies \cite{Kim2016DeepDG,asru,coopnets}, as detailed in Section \ref{sec:related-work}.
	
\subsection{Introducing inclusive-divergence minimized auxiliary generators} \label{sec:introduce-aux-generator}

As shown in Eq. (\ref{eq:RF-grad}), the bottleneck in learning NRFs is that Monte Carlo sampling from the RF model $p_\theta$ is needed to approximate the model  expectation for calculating the gradient. 
Partly inspired from recent work in \cite{Kim2016DeepDG,coopnets,asru,Kuleshov2017NeuralVI} and, to be discussed in the end of this section, partly inspired from auxiliary variable MCMC \cite{neal2011mcmc} and adaptive MCMC \cite{andrieu2008tutorial,roberts2009examples}, we introduce an auxiliary generator to approximate sampling from the target RF.

In this paper, we are mainly interested in modeling fixed-dimensional continuous observations $x \in \mathbb{R}^{d_x}$ (e.g. images).
We use a directed generative model, $q_\phi(x,h) \triangleq q(h)q_\phi(x|h)$, for the auxiliary generator, which is defined as follows\footnote{Note that during training, $\sigma^2$ is absorbed into the learning rates and does not need to be estimated.}:
\begin{equation}\label{eq:generator}
\begin{aligned}
h &\sim \mathcal{N}(0,I_h),\\
x &= g_\phi(h)+\epsilon, \epsilon \sim \mathcal{N}(0,\sigma^2 I_{\epsilon}).
\end{aligned}
\end{equation}
Here $g_\phi(h):\mathbb{R}^{d_h} \rightarrow \mathbb{R}^{d_x}$ is implemented as a neural network with parameter $\phi$, which maps the latent code $h$ to the observation space.
$I_h$ and $I_{\epsilon}$ denote the identity matrices, with dimensionality implied by $h$ and $\epsilon$ respectively.
Drawing samples from the generator $q_\phi(x,h)$ is simple as it is just ancestral sampling from a 2-variable directed graphical model.

For dataset $\mathcal{D} = \left\lbrace \tilde{x}_1, \cdots, \tilde{x}_n \right\rbrace $, consisting of $n$ observations, let $\tilde{p}(\tilde{x}) \triangleq \frac{1}{n} \sum_{k=1}^{n} \delta(\tilde{x} - \tilde{x}_k)$ denotes the empirical data distribution.
A new design in this paper is that we perform the maximum likelihood learning of $p_\theta$ and simultaneously minimize the inclusive divergence between the target random field $p_\theta$ and the auxiliary generator $q_\phi$ by\footnote{Such optimization using two objectives is employed in a number of familiar learning methods, such as GAN with log$D$ trick \cite{goodfellow2014generative}, wake-sleep algorithm \cite{Hinton1995the}.}
\begin{equation}
\label{eq:jrf_unsup_obj}
\left\{
\begin{split}
& \min_{\theta} KL\left[  \tilde{p}(\tilde{x}) || p_\theta(\tilde{x}) \right] \\
& \min_{\phi} KL\left[  p_\theta(x) || q_\phi(x) \right] \\
\end{split}
\right.
\end{equation}
The first line of Eq. (\ref{eq:jrf_unsup_obj}) is equivalent to maximum likelihood training of the target RF $p_\theta$ under the empirical data $\tilde{p}$, which requires sampling from $p_\theta$.
Simultaneously, the second line optimizes the generator $q_\phi$ to be close to $p_\theta$ so that $q_\phi$ becomes a good proposal for sampling from $p_\theta$.
By the following Proposition \ref{prop:nrf_gradient_proof}, we can obtain the gradients w.r.t. $\theta$ and $\phi$ (to be ascended).
In practice, we apply minibatch based stochastic gradient descent (SGD) to solve the optimization problem Eq. (\ref{eq:jrf_unsup_obj}), as shown in Algorithm \ref{alg:learning-NRF-IAG}.

\begin{prop} \label{prop:nrf_gradient_proof}
The gradients for optimizing the two objectives in Eq. (\ref{eq:jrf_unsup_obj}) can be derived as follows:
\begin{equation}
\label{eq:jrf_unsup_gradient}
\left\{
\begin{split}
-\frac{\partial}{\partial \theta} KL&\left[  \tilde{p}(\tilde{x}) || p_\theta(\tilde{x}) \right] \\
&=E_{\tilde{p}(\tilde{x})}\left[\nabla_\theta u_\theta(\tilde{x})\right]-E_{p_\theta(x)}\left[\nabla_\theta u_\theta(x)\right]\\
-\frac{\partial}{\partial \phi} KL&\left[  p_\theta(x) || q_\phi(x) \right]\\
&=E_{p_\theta(x) q_\phi(h|x)}\left[ \nabla_\phi logq_\phi(x,h)\right]
\end{split}
\right.
\end{equation}
\end{prop}
\begin{proof}
	See Appendix \ref{sec:proof-prop-1}.
\end{proof}

Ideally, the learning of $\theta$ could be conducted without $\phi$, by using an MCMC sampler (e.g. LD) to draw samples from $p_\theta(x)$.
But the chain often mixes between modes so inefficiently that severely slow down the learning of $\theta$ especially when the target density $p_\theta(x)$ is multimodal. That is the main difficulty that hinders the effective training of NRFs.
In introducing auxiliary generator $q_\phi$ to approximate the target RF $p_\theta$, we are inspired by two advanced MCMC ideas - auxiliary variable MCMC \cite{neal2011mcmc} and adaptive MCMC \cite{andrieu2008tutorial,roberts2009examples}.
The classic example of adaptive MCMC is adaptive scaling of the variance of the step-size in random-walk Metropolis \cite{roberts2009examples}.
In our case, the auxiliary generator acts like an adaptive proposal, updated by using samples from the target density\footnote{Minimizing the inclusive-divergence tends to drive the generator (the proposal) to have higher entropy than the target density, which is a desirable property for proposal design in MCMC.}.
Further, to be detailed in Section \ref{sec:apply-SGLD}, the target density is extended to be $p_\theta(x) q_\phi(h|x)$, which leaves the original target as the marginal, but sampling in the augmented space $(x,h)$ can be easier (more efficiently), with the help of the adaptive proposal $q_\phi(x,h)$.
This follows the basic idea of auxiliary variable MCMC \cite{neal2011mcmc} - sampling in an augmented space could be more efficient.

\begin{algorithm*}[tb]
	\caption{Learning NRFs with inclusive auxiliary generators}
	\label{alg:learning-NRF-IAG}
	\begin{algorithmic}
		\REPEAT
		\STATE \underline{Sampling:}
		Draw a minibatch $\mathcal{M}=\left\lbrace (\tilde{x}^i,x^i,h^i), i=1,\cdots\,|\mathcal{M}|\right\rbrace $ from $\tilde{p}(\tilde{x}) p_\theta(x) q_\phi(h|x)$ (see Algorithm \ref{alg:model-sampling});
		
		\STATE \underline{Updating:}
		
		Update $\theta$ by ascending:		
		$\frac{1}{|\mathcal{M}|} \sum_{(\tilde{x},x,h) \sim \mathcal{M}}
		\left[\nabla_\theta u_\theta(\tilde{x}) - \nabla_\theta u_\theta(x) \right] $;
		
		Update $\phi$ by ascending:
		$
		\frac{1}{|\mathcal{M}|} \sum_{(\tilde{x},x,h) \sim \mathcal{M}}
		\nabla_\phi \log q_\phi(x,h) $;
		
		\UNTIL{convergence}
	\end{algorithmic}
\end{algorithm*}

\subsection{Developing stochastic gradient samplers for NRF model sampling} \label{sec:apply-SGLD}

In Algorithm \ref{alg:learning-NRF-IAG}, we need to draw samples $(x,h) \in \mathbb{R}^{d_x + d_h}$ in the augmented space defined by our target distribution $p_\theta(x) q_\phi(h|x)$ given current $\theta$ and $\phi$.
For such continuous distribution, samplers leveraging continuous dynamics (namely continuous-time Markov processes described by stochastic differential equations), such as Langevin dynamics (LD) and Hamiltonian Monte Carlo (HMC) \cite{neal2011mcmc}, are known to be efficient in exploring the continuous state space.
Simulating the continuous dynamics leads to the target distribution as the stationary distribution.
The Markov transition kernel defined by the continuous dynamical system usually involves using the gradients of the target distribution, which in our case are as follows:
\begin{equation} \label{eq:grad-x-h}
\left\{
\begin{split}
\frac{\partial}{\partial x} &\log \left[ p_\theta(x) q_\phi(h|x) \right] \\
&= \frac{\partial}{\partial x} \left[ \log  p_\theta(x) +  \log q_\phi(h,x) -  \log q_\phi(x) \right]\\
\frac{\partial}{\partial h} &\log \left[ p_\theta(x) q_\phi(h|x) \right] = \frac{\partial}{\partial h} \log q_\phi(h,x)
\end{split}
\right.
\end{equation}
It can be seen that it is straightforward to obtain the gradient w.r.t. $h$ and the first two terms\footnote{Notably, $\frac{\partial}{\partial x} \log p_\theta(x) = \frac{\partial}{\partial x} u_\theta(x)$ does not require the calculation of the normalizing constant.
} in the gradient w.r.t. $x$.
However, calculating the third term $\frac{\partial}{\partial x} \log q_\phi(x)$ in the gradient w.r.t. $x$ is intractable. Therefore we are interested in developing stochastic gradient variants of continuous-dynamics samplers, which rely on using noisy estimate of $\frac{\partial}{\partial x} \log q_\phi(x)$.

Recently, stochastic gradient samplers have emerged in simulating posterior samples in large-scale Bayesian inference, such as SGLD (stochastic gradient Langevin dynamics) \cite{sgld} and SGHMC (Stochastic Gradient Hamiltonian Monte Carlo) \cite{sghmc}.
To illustrate, consider the posterior $p(\theta|\mathcal{D})$ of model parameters $\theta$ given the observed dataset $\mathcal{D}$, with abuse of notation. 
We have $p(\theta|\mathcal{D}) \propto \exp \left[ \sum_{x \in \mathcal{D}} \log p_\theta(x) + \log p(\theta)  \right] $, which is taken as the target distribution.
Instead of using full-data gradients  $\frac{\partial}{\partial \theta} \log p(\theta|\mathcal{D})$, which needs a sweep over the entire dataset, these samplers subsample the dataset and use stochastic gradients
$
\frac{\partial}{\partial \theta} \left[ \frac{|\tilde{\mathcal{D}|}}{|\mathcal{D}|} \sum_{x \in \tilde{\mathcal{D}}} \log p_\theta(x) + \log p(\theta)  \right]
$ in the dynamic simulation, where $\tilde{\mathcal{D}} \subset \mathcal{D}$ is a subsampled data subset.
In this manner, the computation cost is significantly reduced in each iteration and such Bayesian inference methods scale to large datasets.

In practice, sampling is based on a discretization of the continuous dynamics. Despite the discretization error and the noise introduced by the stochastic gradients, it can be shown that simulating the discretized dynamics with stochastic gradients also leads to the target distribution as the stationary distribution, when the step sizes are annealed to zero at a certain rate. 
The convergence of SGLD/SGHMC can be obtained from \cite{Sato2014ApproximationAO,sghmc,ma2015complete}, as summarized in Theorem \ref{theorem:SGLD}.
	
	\begin{theorem} \label{theorem:SGLD}
		Denote the target density as $p(z;\lambda)$ with given $\lambda$.
		Assume that one can compute a noisy, unbiased estimate $\Delta(z;\lambda)$ (a stochastic gradient) to the gradient $\frac{\partial}{\partial z} \log p(z;\lambda)$.
		For a sequence of asymptotically vanishing time-steps $\left\lbrace \delta_l, l \ge 1 \right\rbrace$ (satisfying $\sum_{l=1}^\infty \delta_l = \infty$ and $\sum_{l=1}^\infty \delta_l^2 < \infty$) and an i.i.d. noise sequence $\eta^{(l)}$, the SGLD iterates as follows, starting from $z^{(0)}$:
		\begin{equation}\label{eq:SGLD}
		\begin{aligned}
		z^{(l)} =& z^{(l-1)}
		+ \delta_l \Delta (z^{(l-1)};\lambda)
		+ \sqrt{2\delta_l} \eta^{(l)},\\
		&\quad\eta^{(l)} \sim \mathcal{N}(0,I), l=1,\cdots\,
		\end{aligned}
		\end{equation}
		
		Starting from $z^{(0)}$ and $v^{(0)}=0$, the SGHMC iterates as follows:
		\begin{equation}
		\label{eq:SGHMC}
		\left\{
		\begin{split}
		v^{(l)} =& (1-\beta) v^{(l-1)}
		+ \delta_l \Delta (z^{(l-1)};\lambda)
		+ \sqrt{2\beta\delta_l} \eta^{(l)},\\
		&\quad\eta^{(l)} \sim \mathcal{N}(0,I)\\
		z^{(l)} =& z^{(l-1)} + v^{(l)}, l=1,\cdots\,
		\end{split}
		\right.
		\end{equation}

The iterations of Eq. (\ref{eq:SGLD}) and (\ref{eq:SGHMC}) lead to the target distribution $p(z;\lambda)$ as the stationary distribution.
	\end{theorem}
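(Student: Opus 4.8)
The plan is to recognize each discrete recursion as the Euler--Maruyama discretization of a continuous-time diffusion whose unique stationary law is the desired target, and then to control the two sources of error --- the time discretization and the replacement of the exact gradient $\frac{\partial}{\partial z}\log p(z;\lambda)$ by the noisy estimate $\Delta(z;\lambda)$ --- in the regime where the step sizes are annealed to zero.

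First I would treat SGLD. With the exact gradient, recursion~(\ref{eq:SGLD}) is the Euler--Maruyama scheme for the overdamped Langevin diffusion $dz_t = \nabla_z \log p(z;\lambda)\, dt + \sqrt{2}\, dW_t$, and a one-line Fokker--Planck computation shows $p(z;\lambda)$ is its stationary density, since the forward operator annihilates $p$: $-\nabla\cdot(p\,\nabla\log p) + \nabla\cdot\nabla p = -\nabla\cdot\nabla p + \nabla\cdot\nabla p = 0$. Writing $\Delta(z;\lambda) = \nabla_z\log p(z;\lambda) + \xi^{(l)}$ with $\xi^{(l)}$ zero-mean and of bounded variance, the stochastic-gradient term contributes only $\delta_l\,\xi^{(l)}$ to the update; since the injected Gaussian noise scales as $\sqrt{2\delta_l}$, the ratio of stochastic-gradient noise to injected noise is $O(\sqrt{\delta_l})\to 0$, so it is asymptotically negligible. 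The hypotheses $\sum_l \delta_l = \infty$ and $\sum_l \delta_l^2 < \infty$ are exactly the Robbins--Monro conditions: the first guarantees the chain eventually explores the whole state space, the second forces the accumulated discretization-plus-noise bias to vanish. Making this rigorous is the argument of \cite{sgld} as tightened in \cite{Sato2014ApproximationAO}, e.g. via a Taylor/generator expansion of the one-step transition kernel giving weak error $O(\delta_l)$ per step, then summing against the schedule.

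Next I would treat SGHMC. Introducing the momentum $v$, recursion~(\ref{eq:SGHMC}) is the Euler discretization of the underdamped (kinetic) Langevin SDE $dz_t = v_t\, dt$, $dv_t = \nabla_z\log p(z;\lambda)\, dt - \beta' v_t\, dt + \sqrt{2\beta'}\, dW_t$, where $\beta'$ is the continuous-time friction corresponding to the discrete $\beta$. A Fokker--Planck check shows its stationary law is $p(z;\lambda)\,\mathcal{N}(v;0,I)$, whose $z$-marginal is $p(z;\lambda)$; the structural ingredient here is the fluctuation--dissipation balance between the friction on $v$ and the diffusion coefficient on $v$. Following \cite{sghmc,ma2015complete}, the role of the friction is precisely to absorb the extra variance injected by the stochastic gradient, so that under the idealized assumption that the friction is set to match (or dominate) the gradient-noise covariance, the invariant law is preserved; the same annealed-step-size argument as for SGLD then yields convergence to $p(z;\lambda)$.

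The main obstacle is the rigorous convergence analysis of these inhomogeneous (decreasing-step-size), discretization-biased, stochastic-gradient-perturbed Markov chains: one needs smoothness and dissipativity/growth conditions on $\log p(z;\lambda)$ --- not spelled out in the statement --- to bound the per-step weak error, control moments of the iterates, and sum those bounds against $\{\delta_l\}$, and for SGHMC the friction--diffusion matching must be handled carefully. I would not reprove these technical lemmas; instead I would invoke \cite{Sato2014ApproximationAO,sghmc,ma2015complete}, whose results cover exactly recursions~(\ref{eq:SGLD}) and~(\ref{eq:SGHMC}), observing that in our setting $z=(x,h)$, $\lambda=(\theta,\phi)$, the target $p_\theta(x)q_\phi(h|x)$ satisfies their assumptions for the neural potentials used, with the required unbiased stochastic gradient supplied by the noisy estimate of $\frac{\partial}{\partial x}\log q_\phi(x)$ in~(\ref{eq:grad-x-h}).
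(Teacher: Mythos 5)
Your proposal is correct and follows essentially the same route as the paper: the paper gives no independent proof of this theorem, stating explicitly that it is ``summarized from'' \cite{Sato2014ApproximationAO,sghmc,ma2015complete}, which is exactly where you defer the technical convergence analysis. Your added sketch (Euler--Maruyama view, Fokker--Planck stationarity of the overdamped and underdamped Langevin dynamics, the $O(\sqrt{\delta_l})$ relative scaling of gradient noise versus injected noise, the Robbins--Monro step-size conditions, and the friction/fluctuation--dissipation issue for SGHMC) is a sound account of why those cited results apply.
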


By considering $z \triangleq (x,h)$, $p(z; \lambda) \triangleq p_\theta(x) q_\phi(h|x)$, $\lambda \triangleq (\theta, \phi)^T$, and Eq. (\ref{eq:grad-x-h}),
we can use Theorem \ref{theorem:SGLD} to develop the sampling step for Algorithm \ref{alg:learning-NRF-IAG}, as presented in Algorithm \ref{alg:model-sampling}.
For the gradient w.r.t. $x$, the intractable term $\frac{\partial}{\partial x} \log q_\phi(x)$ is estimated by a stochastic gradient.

\begin{prop} \label{prop:unbiased-est}
	Given $q_\phi(h,x)$, we have
	\begin{equation} \label{eq:unbiased-est}
	\frac{\partial}{\partial x} \log q_\phi(x)
	= E_{h^* \sim q_\phi(h^*|x)} \left[ \frac{\partial}{\partial x} \log q_\phi(h^*,x) \right].
	\end{equation}
\end{prop}
\begin{proof}
	See Appendix \ref{sec:proof-prop-2}.
\end{proof}

Motivated by Proposition \ref{prop:unbiased-est},
ideally we draw $h^* \sim q_\phi(h^*|x)$ and then use $\frac{\partial}{\partial x} \log q_\phi(h^*,x)$ as an unbiased estimator of $\frac{\partial}{\partial x} \log q_\phi(x)$.
In practice, at step $l$, given $x^{(l-1)}$ and starting from $h^{(l-1)}$, we run one step of LD sampling over $h$ targeting $q_\phi(h|x^{(l-1)})$, to obtain $h^{(l-1)*}$ and calculate $\frac{\partial}{\partial x^{(l-1)}} \log q_\phi(h^{(l-1)*},x^{(l-1)})$. This gives a biased but tractable estimator to  $\frac{\partial}{\partial x} \log q_\phi(x)$. It is empirically found in our experiments that more steps of this inner LD sampling do not significantly improve the performance for NRF learning.

	\begin{algorithm*}[tb]
	\caption{Sampling in the augmented space defined by $p_\theta(x) q_\phi(h|x)$}
	\label{alg:model-sampling}
	\begin{algorithmic}
		\STATE 1. Conduct ancestral sampling from the auxiliary generator $q_\phi(x,h)$, i.e. first draw $h' \sim q(h')$, and then draw $x' \sim q_\phi(x'|h')$;
		\STATE 2. Starting from $(x',h') = z^{(0)}$, run finite steps of SGLD/SGHMC $(l=1,\cdots,L)$ to obtain $(x,h)=z^{(L)}$, which we call \emph{sample revision}, according to Eq. (\ref{eq:SGLD}) or (\ref{eq:SGHMC}). 
		\STATE In particular, the SGLD recursions are conducted as follows:
		\begin{equation} \label{eq:SGLD-specific}
		\left\{
		\begin{split}
		x^{(l)} =& x^{(l-1)}
		+ \delta_l \frac{\partial}{\partial x^{(l-1)}} \left[ \log  p_\theta(x^{(l-1)}) +  \log q_\phi(h^{(l-1)},x^{(l-1)}) -  \log q_\phi(h^{(l-1)*},x^{(l-1)}) \right]
		+ \sqrt{2\delta_l} \eta_x^{(l)},\\
		h^{(l)} =& h^{(l-1)}
		+ \delta_l \frac{\partial}{\partial h^{(l-1)}} \log q_\phi(h^{(l-1)},x^{(l-1)})
		+ \sqrt{2\delta_l} \eta_h^{(l)}, \quad\eta^{(l)}\triangleq (\eta_x^{(l)}, \eta_h^{(l)})^T \sim \mathcal{N}(0,I)\\
		\end{split}
		\right.
		\end{equation}
		\STATE and the SGHMC recursions are conducted as follows:
		\begin{equation} \label{eq:SGHMC-specific}
		\left\{
		\begin{split}
		v_x^{(l)}=&(1-\beta)v_x^{(l-1)}+\delta_l \frac{\partial}{\partial x^{(l-1)}} \left[ \log  p_\theta(x^{(l-1)}) + \log q_\phi(h^{(l-1)},x^{(l-1)}) - \log q_\phi(h^{(l-1)*},x^{(l-1)}) \right]+ \sqrt{2\beta\delta_l} \eta_x^{(l)},\\
		v_h^{(l)}=&(1-\beta)v_h^{(l-1)}+\delta_l \frac{\partial}{\partial h^{(l-1)}} \log q_\phi(h^{(l-1)},x^{(l-1)})
		+ \sqrt{2\beta\delta_l} \eta_h^{(l)},\\
		x^{(l)} =& x^{(l-1)}+v_x^{(l)}, \quad h^{(l)} = h^{(l-1)}+ v_h^{(l)}, \quad\eta^{(l)}\triangleq (\eta_x^{(l)}, \eta_h^{(l)})^T \sim \mathcal{N}(0,I)\\
		\end{split}
		\right.
		\end{equation}
		\STATE 
		 where, for $l > 1$, $h^{(l-1)*}$, which is an approximate sample from $q_\phi(h|x^{(l-1)})$ given $x^{(l-1)}$, is obtained from running one step of LD as follows, starting from $h^{(l-1)}$:
		\begin{equation} \label{eq:h-LD}
		h^{(l-1)*}=h^{(l-1)}+ \delta_{l}^{*} \frac{\partial}{\partial h^{(l-1)}} \log q_\phi(h^{(l-1)},x^{(l-1)})
		+ \sqrt{2\delta_{l}^{*}} \eta_{h}^{(l)*}, \quad \eta_h^{(l)*} \sim \mathcal{N}(0,I);
		\end{equation}
		for $l=1$, we directly use $h^{(0)}$ as $h^{(0)*}$, since, by initialization, $h^{(0)}$ is an exact sample from $q_\phi(h|x^{(0)})$ given $x^{(0)}$.
		\STATE \textbf{Return} $(x,h)$, i.e. $z^{(L)}$.
	\end{algorithmic}
\end{algorithm*}

So instead of using the exact gradient $\frac{\partial}{\partial z} \log p(z;\lambda)$ as shown in Eq. (\ref{eq:grad-x-h}) in our case, we develop a tractable biased stochastic gradient $\Delta(z;\lambda)$ as follows:
\begin{equation}  \label{eq:stochastic-grad}
\Delta(z;\lambda) \triangleq
\left( \begin{array}{c}
\frac{\partial}{\partial x} \left[ \log  p_\theta(x) +  \log q_\phi(h,x) -  \log q_\phi(h^*,x) \right] \\
\frac{\partial}{\partial h} \log q_\phi(h,x)
\end{array} \right),
\end{equation}
where $h^*$ is an approximate sample from $q_\phi(h^*|x)$ obtained by running one step of LD from $(h,x)$.
Remarkably, as we show in Algorithm \ref{alg:model-sampling}, the starting point $(h^{(0)}, x^{(0)})$ for the SGLD/SGHMC recursions is obtained from an ancestral sampling from $ q_\phi(h,x)$. Thus at step $l=1$, $h^{(0)}$ is already a sample from $q_\phi(h|x^{(0)})$ given $x^{(0)}$, and we can directly use $h^{(0)}$ as $h^{(0)*}$ without running the inner LD sampling.
Afterwards, for $l > 1$, the conditional distribution of $h^{(l-1)}$ given $x^{(l-1)}$ is close to $q_\phi(h|x^{(l-1)})$, though strictly not.
We could run one or more steps of LD to obtain $h^{(l-1)*}$ to reduce the bias in the stochastic gradient estimator.

With the above stochastic gradients in Eq. (\ref{eq:stochastic-grad}), the sampling step in Algorithm \ref{alg:learning-NRF-IAG} can be performed by running $|\mathcal{M}|$ parallel chains, each chain being executed by running finite steps of SGLD/SGHMC with tractable gradients w.r.t. both $x$ and $h$, as shown in Algorithm \ref{alg:model-sampling}. 
Intuitively, the auxiliary generator first gives a proposal $(x',h')$, and then the system follows the gradients of $p_\theta(x)$ and $q_\phi(h,x)$ (w.r.t. $x$ and $h$ respectively) to revise $(x',h')$ to $(x,h)$.
The gradient terms pull samples moving to low energy region of the random field and adjust the latent code of the generator, while the noise term brings randomness.
In this manner, we obtain Markov chain samples in the augmented space defined by $p_\theta(x) q_\phi(h|x)$.
Finally note that, as discussed before, finite steps in Eq. (\ref{eq:SGLD-specific})-(\ref{eq:h-LD}) in applying SGLD/SGHMC sampling from $p_\theta(x) q_\phi(h|x)$ will produce biased estimates of the gradients in Eq. (\ref{eq:jrf_unsup_gradient}) for NRF learning.
We did not find this to pose problems to the SGD optimization in practice, as similarly found in \cite{cd,bornschein2014reweighted,Kuleshov2017NeuralVI}, which work with biased gradient estimators.

\begin{figure}[htb]
	\center
	\includegraphics[width=0.5\textwidth]{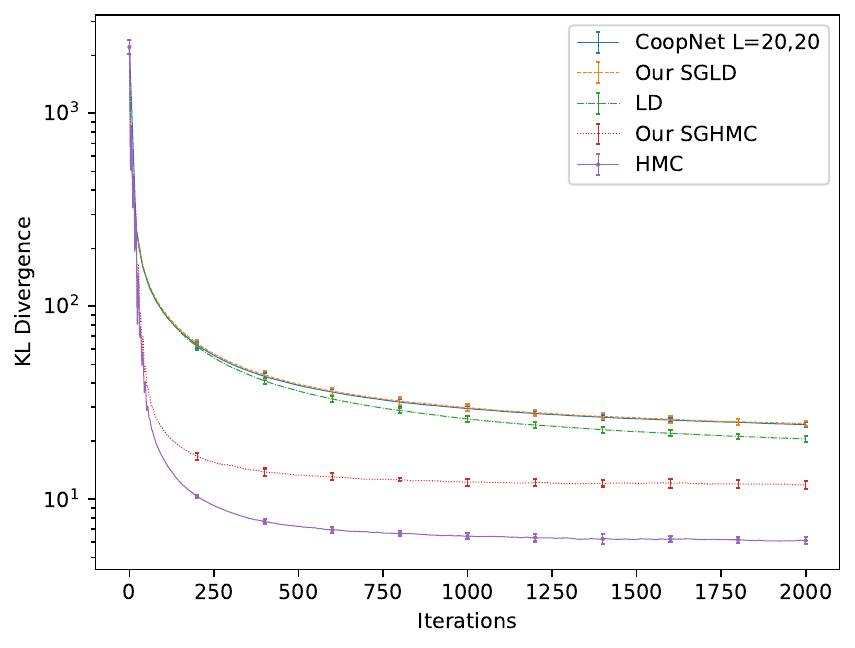}
	\caption{Sampler’s performance measured by the KL divergence with 10 independent runs to obtain standard deviations.
		``CoopNet $L=20,20$'' denotes the sampling method in \cite{coopnets} with $(L_x=20, L_h=20)$.
		``LD'' or ``HMC'' means the Langevin Dynamics or Hamiltonian Monte Carlo sampling of the target distribution $p_\theta(x)q_\phi(h|x)$ with exact gradients.
		``Our SGLD'' or ``Our SGHMC'' are our developed samplers with stochastic gradients (Algorithm \ref{alg:model-sampling}).
		We fix the total iterations of $x$ and $h$ to be the same for each sampling method. Thus one iteration of ``CoopNet $L=20,20$'' would be regarded as 20 iterations of other methods in the figure.}
	\label{fig:sampler}
\end{figure}

\subsubsection{Evaluation of the stochastic gradient samplers}

To examine the sampling performance of our  SGLD/SGHMC samplers, we conduct a synthetic experiment and the results are shown in Figure \ref{fig:sampler}.
The $p_\theta(x)$ and $q_\phi(x,h)$ are 50D and 100D  Gaussians respectively with randomly generated covariance matrices (i.e. both $x$ and $h$ are of 50D).
For evaluation, we simulate $K = 500$ parallel chains for $T = 2000$ steps.
We follow \cite{ma2015complete} to evaluate the sampler’s performance measured by the KL divergence from the empirical Gaussian (estimated by the samples) to the ground truth $p_\theta(x) q_\phi(h|x)$.
We use the stepsize schedule of $\delta _ { t } = \left( a \cdot \left( 1 + \frac { t } { b } \right) \right) ^ { - c }$ like in \cite{ma2015complete} with $(a=10, b=1000, c=2)$ for all methods, 
and $\beta=0.1$ for SGHMC, and we find that these hyperparameters perform well for each method during the experiment.
The main observations are as follows. 
First, SGLD and SGHMC converge, though worse than their counterparts using exact gradients (LD and HMC).
Second, HMC samplers, whether using exact gradients or using stochastic gradients, outperform the corresponding LD samplers, since HMC dynamics, also referred to as second-order Langevin dynamics, exploit an additional momentum term.
Third, interestingly, the SGHMC sampler outperforms the LD sampler with exact gradients.
This reveals the benefit of our systematic development of the stochastic gradient samplers, including but not limited to SGLD and SGHMC.
Although the CoopNet sampler in \cite{coopnets} (to be described in Section \ref{sec:related-work}) performs close to our SGLD sampler, its performance is much worse than our SGHMC sampler. Our SGHMC is a new development, which cannot be obtained from simply extending the CoopNet sampler.

\section{Related work}
\label{sec:related-work}

Comparison and connection of our inclusive-NRF approach with related work are provided in the following from four perspectives, which reveal our major contribution on top of prior work - the first in learning NRFs for continuous data by minimizing the inclusive-divergence and developing stochastic gradient sampling, with theoretical analysis and  state-of-the-art performance in applications of image generation and anomaly detection as shown in experiments.

\subsection{Learning NRFs}

Learning random fields (RFs) for generative modeling remains to be a challenging problem after many years of research with many important methods.
The primary difficulty is that calculating the gradient of log-likelihood requires the intractable expectation w.r.t. the RF model $p_\theta$.
We classify two classes of methods to address this difficulty, depending on whether additional auxiliary models are introduced or not, apart from the target NRFs.

\subsubsection{Learning without auxiliary models}

An important class of methods in learning RFs is based on stochastic approximation (SA) \cite{SA51}, which approximates the model expectation by Monte Carlo sampling, and iterates Monte Carlo sampling from current $p_\theta$ and parameter updating of $\theta$.
The classic SA-based algorithm, initially proposed in \cite{younes1989parametric}, is often called stochastic maximum likelihood (SML).
In the literature on training restricted Boltzmann machines (RBMs), SML is also known as persistent contrastive divergence (PCD) \cite{tieleman2008training} to emphasize that the Markov chain (MC) is persistent between parameter updates, which yields unbiased gradient estimates.
The contrastive divergence (CD) method \cite{cd} initializes the finite-step MC sampling from the observed samples, thus losing the chain persistence and yielding biased estimates of gradients in general \cite{carreira2005contrastive}. Empirically, CD has been shown to be effective in training RBMs.
Recently in \cite{IGG-EBM} (IGG-EBM), a replay buffer of past generated samples is maintained. Langevin dynamics is initialized from the replay buffer 95\% of the time and from uniform noise otherwise, which heuristically may reduce mixing times between chains. However, IGG-EBM \cite{IGG-EBM} only achieved moderate image generation performance over CIFAR-10, even using 60 steps of Langevin dynamics to generate model samples in training, which is computational much more expensive than our approach.

Alternative objectives other than maximum-likelihood exist, such as in pseudolikelihood \cite{besag1975statistical}, score matching (SM) \cite{hyvarinen2005estimation}, noise contrastive estimation (NCE) \cite{gutmann2012noise}, and minimum
probability flow \cite{sohl2009minimum}.
These methods are typically reported to work well for low-dimensional data with simple potentials \cite{song2019generative}.
There exist some recent progress along SM and NCE.

Basically, SM works by matching the model score function and the data score function\footnote{The score function of a density of $x$ is defined as the gradient of the log-density w.r.t. $x$, as defined in \cite{hyvarinen2005estimation}}.
Recently, in \cite{song2019generative}, a multiple-layer NN, called noise conditional score network (NCSN), is trained to approximate the model score function. After training, NCSN uses an annealed Langevin dynamics to produce descent images, using a total of $10 \times 1000$ Langevin steps with 10 noise levels. In contrast to our approach, NCSN is computational much more expensive in image generation and loses the capability in providing (unnormalized) density estimate (since it only trains the score function). Thus NCSN cannot be applied in anomaly detection, which requires density estimate.

Basically, NCE works by discriminating between data samples (evaluated by model distribution) and noise samples drawn from a known noise distribution\footnote{In some sense, NCE could also be classified into the second class of methods which use auxiliary models.}.
Recently, dynamic NCE (DNCE) \cite{wang2018improved} improves NCE by introducing a dynamic noise distribution and using the interpolation of the data distribution and the dynamic noise distribution to train the discriminator.
NRF language models are successfully trained by DNCE \cite{wang2018improved}. 
Notably, NCE requires both easy sampling and likelihood evaluation from the noise model, and thus may pose some limitation in applications. For example, NCE cannot use latent-variable auxiliary generators as noise models, because likelihood evaluation is intractable for such noise models.

\subsubsection{Learning with auxiliary models}

A recent progress in learning NRFs as studied in \cite{Kim2016DeepDG,coopnets,asru,Kuleshov2017NeuralVI} is to jointly train the target random field $p_\theta(x)$ and an auxiliary generator $q_\phi(x)$.
In practice, the performance of learning NRFs with auxiliary models generally performs better than without auxiliary models.
Different studies mainly differ in the objective functions used in the joint training, and thus have different computational and statistical properties.

It is shown in Proposition \ref{prop:Kim-Bengio} in Appendix \ref{sec:proof-kim-bengio} that learning in \cite{Kim2016DeepDG} minimizes the exclusive-divergence $KL[q_\phi||p_\theta]$ w.r.t. $\phi$, which involves the intractable entropy term $H \left[ q_\phi \right]$ and tends to enforce the generator to seek modes, yielding missing modes. We refer to this approach as exclusive-NRF.

Learning in \cite{asru} and in this paper minimizes the inclusive-divergence $KL[p_\theta||q_\phi]$ w.r.t. $\phi$.
	But noticeably, this paper presents our innovation in development of NRFs for continuous data, which is fundamentally different from \cite{asru} for discrete data.	
	The target NRF model, the generator and the sampler  all require new designs.
	\cite{asru} mainly studies random field language models, using LSTM generators (autoregressive with no latent variables) and employing Metropolis independence sampler (MIS) - applicable for discrete data (natural sentences).
	In this paper, we design random field models for continuous data (e.g. images), choosing latent-variable generators and developing SGLD/SGHMC to exploit noisy gradients in the continuous space.

	In \cite{coopnets} (CoopNet), motivated by interweaving maximum likelihood training of the random field $p_\theta$ and the latent-variable generator $q_\phi$, a joint training method is introduced to train NRFs.
	There are clear differences that distinguish our inclusive-NRF approach from CoopNet in both the sampler and the objective function.
	First, CoopNet uses LD sampling to generate samples, but two LD sampling steps are intuitively interleaved according to $ \frac{\partial}{\partial x} \log  p_\theta(x)$ (with $L_x$ steps) and $\frac{\partial}{\partial h} \log q_\phi(h,x)$ (with $L_h$ steps) separately, not aiming to draw samples from $p_\theta(x) q_\phi(h|x)$.
This is different from our stochastic gradient sampler in the augmented space, which moves $(x,h)$ jointly, as systematically developed in Section \ref{sec:apply-SGLD}.
Moreover, our SGHMC is a new development, which cannot be obtained from simply extending the CoopNet sampler.
	Second, according to theoretical understanding in \cite{coopnets}, Coopnet targets the following joint optimization problem:
	\begin{displaymath}
	\label{eq:coopnet_obj}
	\left\{
	\begin{split}
	& \min_{\theta} \left\lbrace KL\left[  \tilde{p}(\tilde{x}) || p_\theta(\tilde{x}) \right] - KL\left[ r(h,x)  || p_\theta(x) \right]\right\rbrace \\
	& \min_{\phi} KL\left[  r(h,x) || q_\phi(h,x) \right] \\
	\end{split}
	\right.
	\end{displaymath}
where $r(h,x)$ denotes the distribution of $(x^{(L_x)}, h^{(L_h)})$, resulting from the CoopNet sampler.
	This objective is also clearly different from our learning objective as shown in Eq. (\ref{eq:jrf_unsup_obj}), which aims to minimize the inclusive-divergence $KL[p_\theta||q_\phi]$ w.r.t. $\phi$.
	Regarding performance comparison, firstly the CoopNet sampler performs much worse than our SGHMC sampler, as shown in Figure \ref{fig:sampler}.
	It is further shown in Table \ref{is_unsupervised} that inclusive-NRF with SGLD outperforms CoopNet in image generation, and in Table \ref{tab:ablation} that utilizing SGHMC in learning inclusive-NRFs to exploit gradient information with momentum yields further better performance than using SGLD.

Learning in \cite{Kuleshov2017NeuralVI} minimizes the $\chi^2$-divergence $\chi^2[q_\phi||p_\theta] \triangleq \int \frac{(p_\theta-q_\phi)^2}{q_\phi} $ w.r.t. $\phi$, which also tends to drive the generator to cover modes. But this approach is severely limited by the high variance of the gradient estimator w.r.t. $\phi$, and is only tested on the simpler MNIST and Omniglot.

Learning in \cite{han2019divergence} further introduces an inference model, apart from the target NRF and the  latent-variable generator, and jointly optimizes the three models under a divergence triangle.
Such study of augmenting the NRF with the inference capability over latent variables is interesting but outside the scope of current work.

Additionally, different NRF studies also distinguish in models used in the joint training with auxiliary models. The target NRF used in this work is different from those in previous studies \cite{Kim2016DeepDG,asru,coopnets}. The differences are: \cite{Kim2016DeepDG} includes additional linear and squared terms in $u_{\theta}(x)$, \cite{asru} defines over discrete-valued sequences, and \cite{coopnets} defines in the form of exponential tilting of a reference distribution (Gaussian white noise).
There also exist different choices for the generator, such as GAN models in \cite{Kim2016DeepDG}, LSTMs in \cite{asru}, or latent-variable models in  \cite{coopnets} and this work.

\subsection{Monte Carlo sampling}

Monte Carlo sampler is a crucial component in learning NRFs to approximate the model expectation.
For continuous data, gradient-based MCMC method - LD has been used in \cite{xie2016theory,IGG-EBM} and HMC in \cite{teh2003energy,ng11} to sample from the RF model $p_\theta$.
Like most MCMC methods, LD exhibits high auto-correlation and has difficulty mixing between separate modes. HMC improves over LD but still struggle to move well between modes, especially when the dimensionality is high \cite{neal2011mcmc}.

We draw inspiration from auxiliary variable MCMC  \cite{neal2011mcmc} to sample in an augmented space.
HMC \cite{neal2011mcmc} is a classic auxiliary variable MCMC which introduces auxiliary momentum variable.
Our approach introduces and simultaneously learns an auxiliary generator $q_\phi(x,h)$ to capture latent low-dimensional structure in the target NRF, and develop stochastic gradient sampler for the joint variable $(x,h)$.
We also draw inspiration from adaptive MCMC \cite{andrieu2008tutorial,roberts2009examples} to use adaptive proposal. The auxiliary generator acts like an adaptive proposal.
Therefore, our sampler embodies both auxiliary variable MCMC and adaptive MCMC. This feature presumably explains why our approach achieves  efficient sampling and thus successful learning of NRFs.

Recently, there are several efforts to develop advanced MCMC samplers that, like our sampler, embody both auxiliary variable MCMC and adaptive MCMC, such as in \cite{Levy2017GeneralizingHM,habib2018auxiliary}.
The L2HMC \cite{Levy2017GeneralizingHM} learns a parametric leapfrog operator to extend HMC.
The auxiliary variational MCMC \cite{habib2018auxiliary} is similar to our sampler, but optimizes the auxiliary generator by mimimizing exclusive-divergence.
To improve their sampler, a potential avenue for future work said in \cite{habib2018auxiliary} is to use inclusive-divergence minimization to ensure good coverge of the auxiliary generator. That is almost what we do in this work.
Notably, the above samplers \cite{Levy2017GeneralizingHM,habib2018auxiliary} are mainly evaluated for sampling from a given model, and not for learning NRFs.

\subsection{Comparison and connection with GANs}
GANs are state-of-the-art models for image generation, but lack in providing density estimate.
On the one hand, there are some efforts that aim to address the inability of GANs to provide sensible energy estimates for samples. The energy-based GANs (EBGAN) \cite{ebgan} proposes to view the discriminator as an energy function by designing an auto-encoder discriminator.
The recent work in \cite{Dai2017CalibratingEG} connects \cite{ebgan} and \cite{Kim2016DeepDG}, and show another two approximations for the entropy term.
However, it is known that as the generator converges to the true data distribution, the GAN discriminator converges to a degenerate uniform solution. This basically afflicts the GAN discriminator to provide density information, though there are some modifications.
In contrast, our inclusive-NRFs, unlike GANs, naturally provide (unnormalized) density estimate, which is examined with GMM synthetic experiments and anomaly detection benchmarking experiments.

	\begin{figure*}
	\vskip -0.2in
	\subfigure[Training data]{
		\begin{minipage}{0.23\textwidth}
			\flushleft  
			\includegraphics[width=\textwidth]{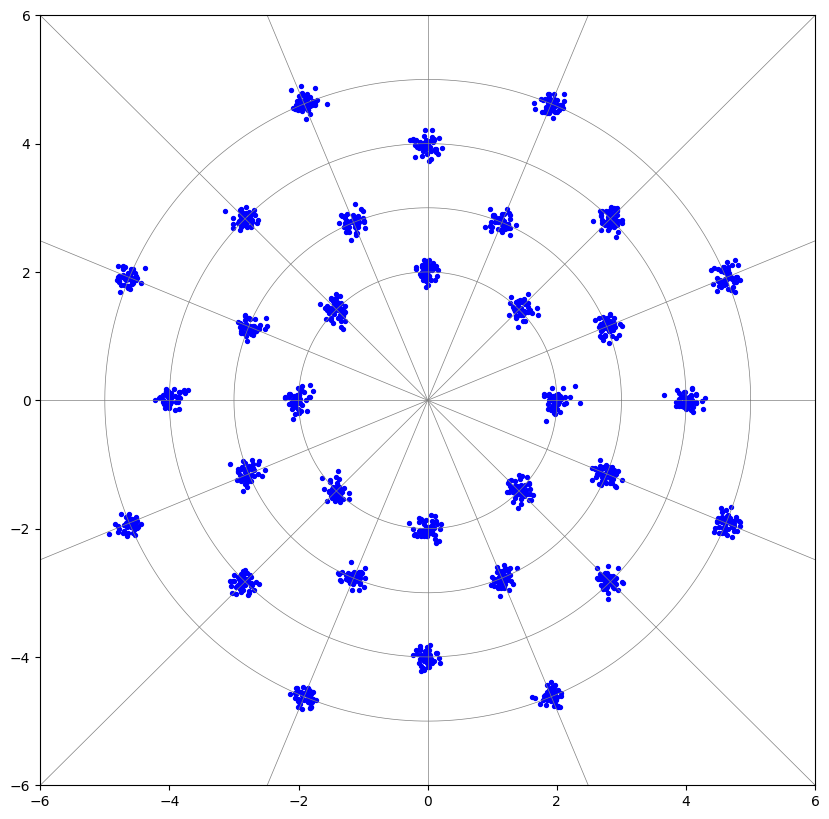}  
	\end{minipage}}
	\subfigure[GAN generation]{
		\begin{minipage}{0.23\textwidth}  
			\flushleft
			\includegraphics[width=\textwidth]{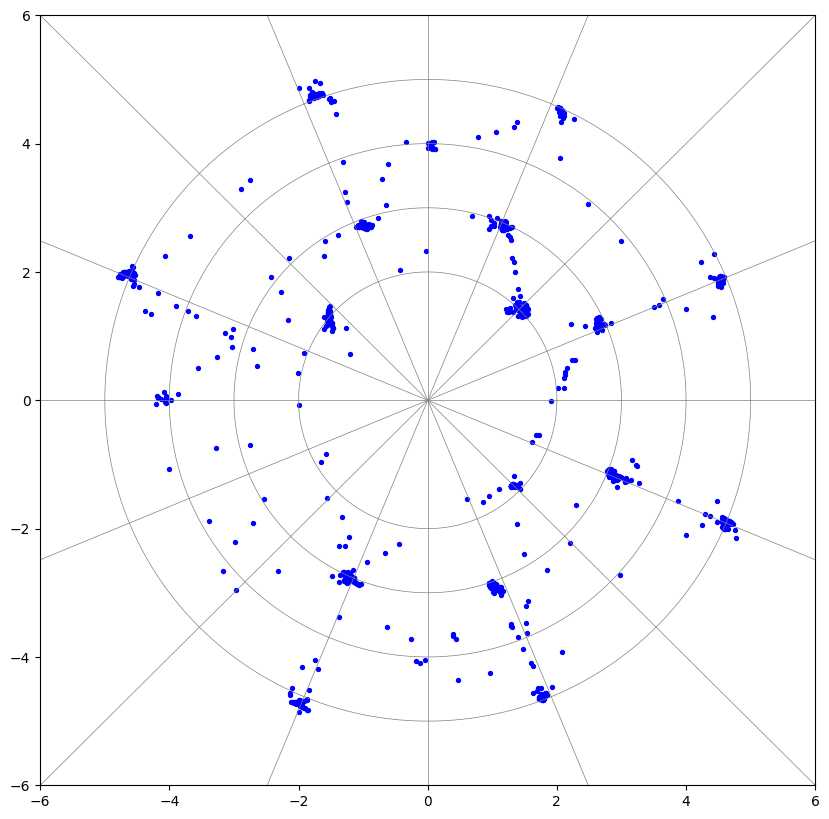}  
	\end{minipage}}
	\subfigure[WGAN-GP generation]{  
		\begin{minipage}{0.23\textwidth}  
			\flushleft  
			\includegraphics[width=\textwidth]{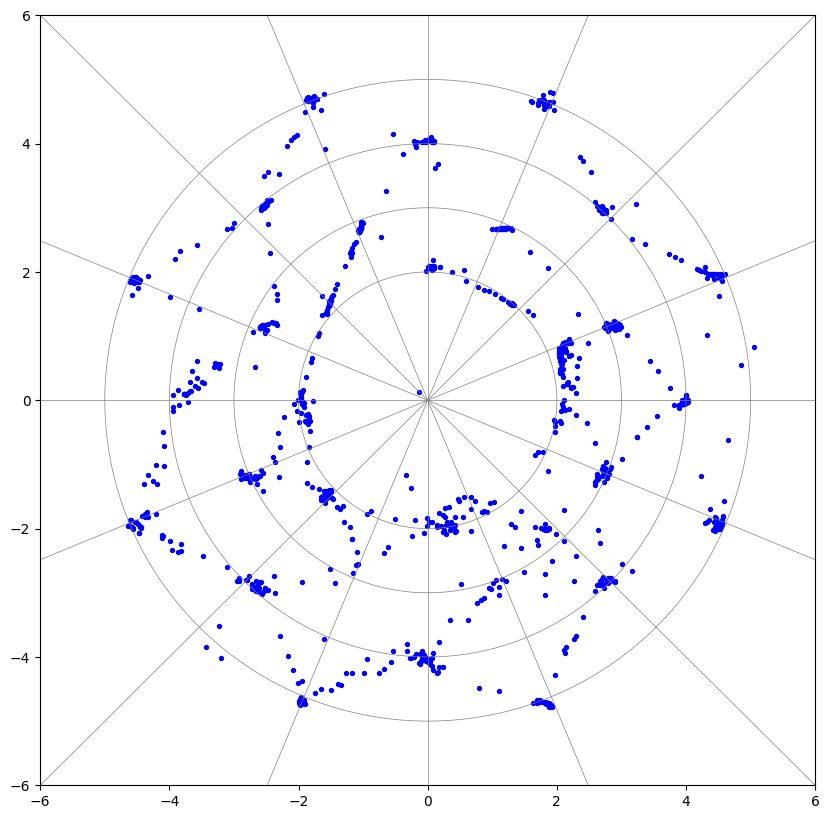}    
	\end{minipage}}
	\subfigure[Exclusive-NRF generation]{  
		\begin{minipage}{0.23\textwidth}  
			\flushleft  
			\includegraphics[width=\textwidth]{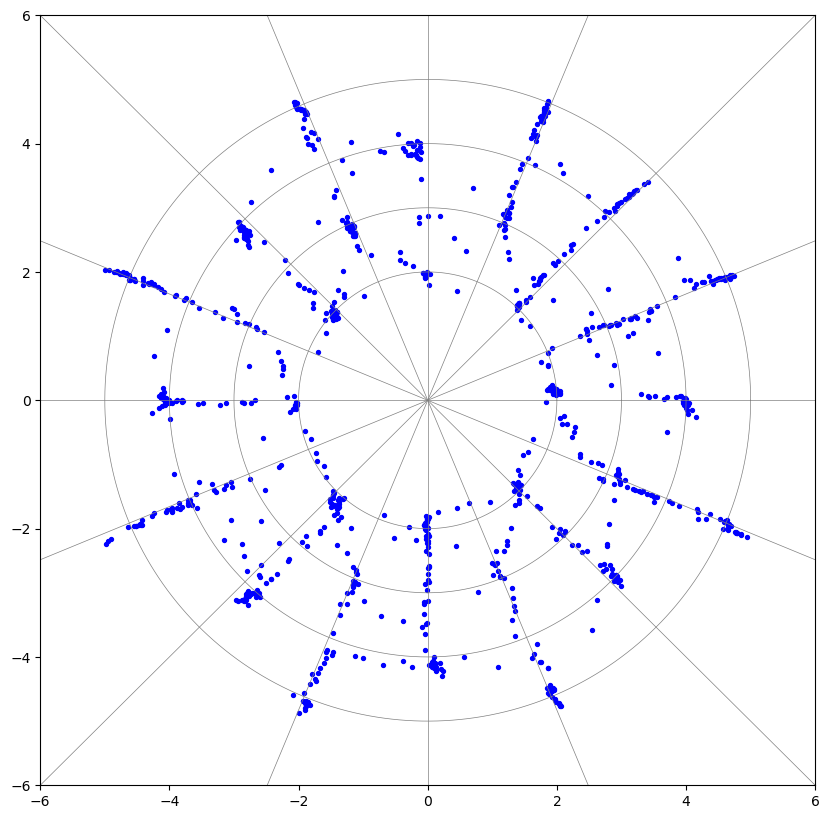}    
	\end{minipage}}
	\subfigure[Inclusive-NRF generation]{  
		\begin{minipage}{0.23\textwidth}  
			\flushleft  
			\includegraphics[width=\textwidth]{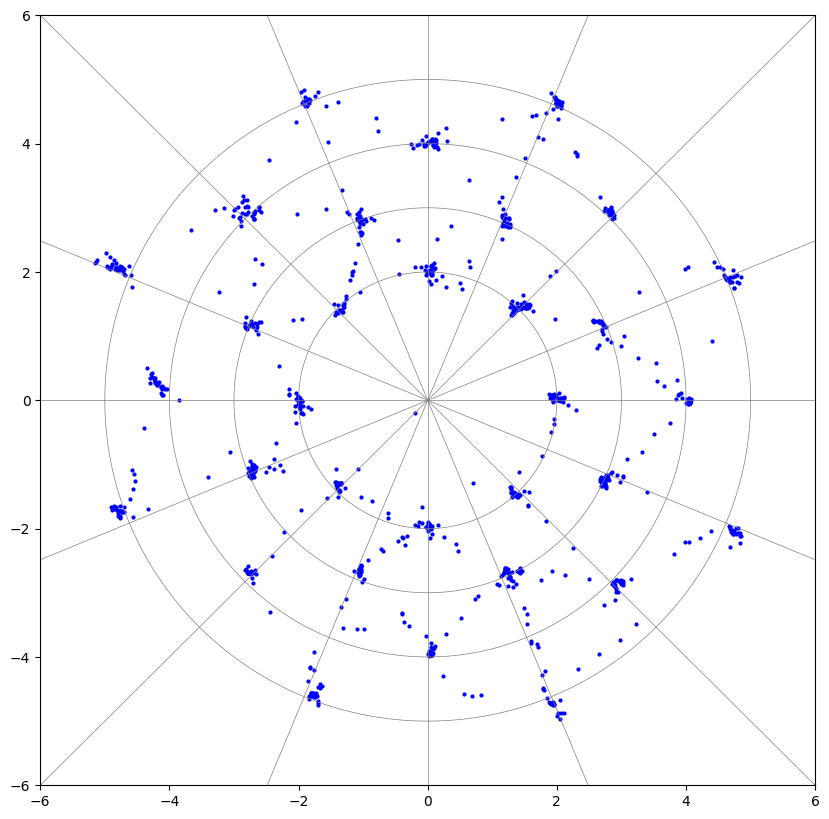}    
	\end{minipage}}
	\subfigure[Inclusive-NRF revision]{  
		\begin{minipage}{0.23\textwidth}  
			\centering  
			\includegraphics[width=\textwidth]{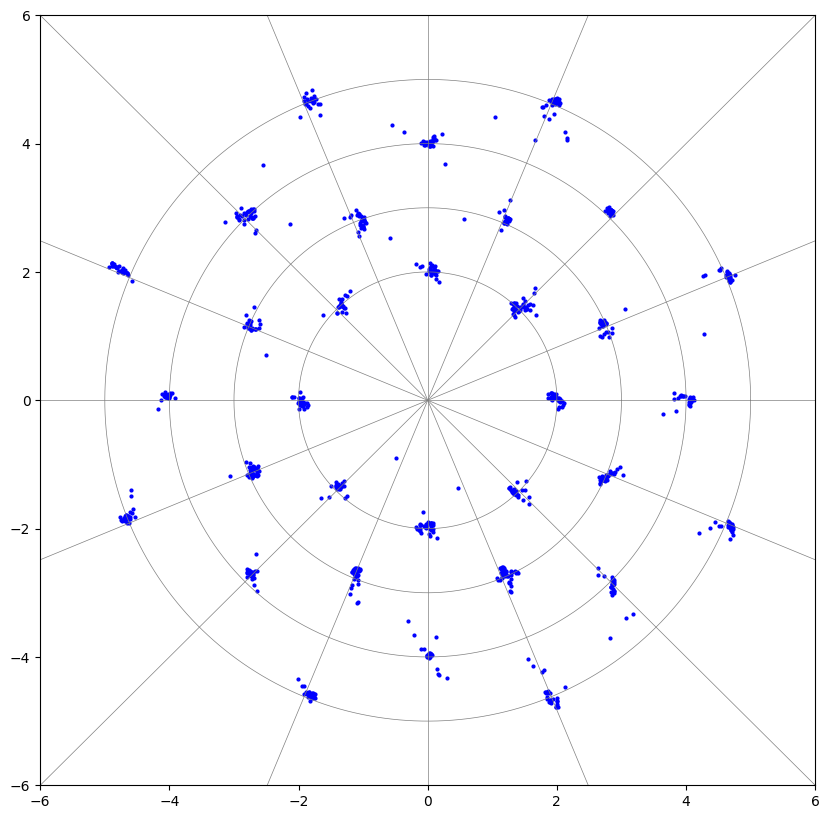}    
	\end{minipage}}
	\hspace{4pt}
	\subfigure[Exclusive-NRF potential]{  
		\begin{minipage}{0.24\textwidth}  
			\centering  
			\includegraphics[width=1\textwidth]{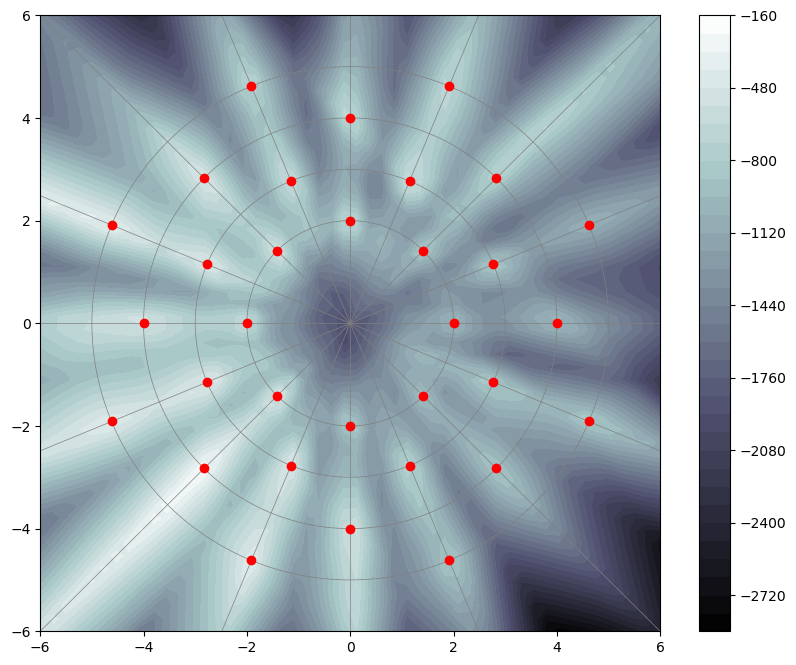}    
	\end{minipage}}
	\subfigure[Inclusive-NRF potential]{  
		\begin{minipage}{0.24\textwidth}  
			\centering  
			\includegraphics[width=1\textwidth]{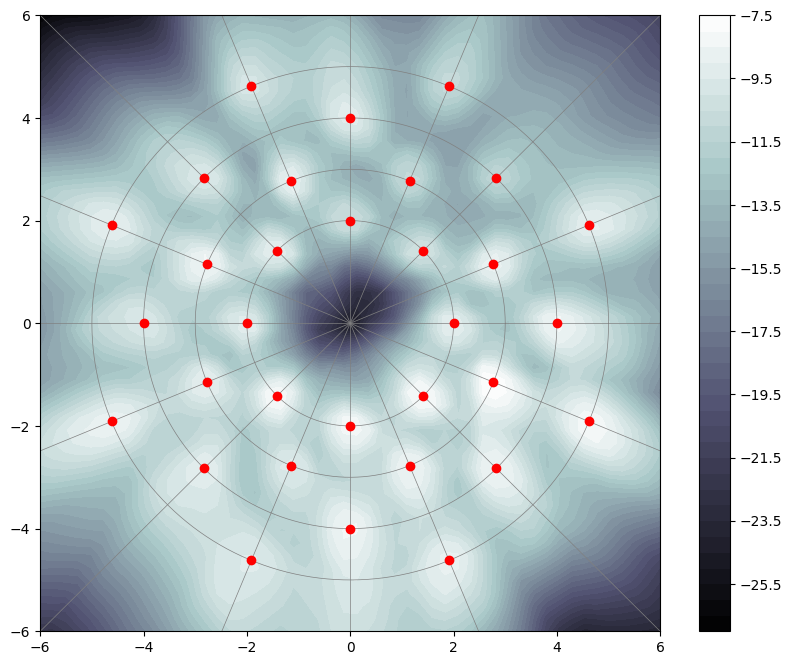}    
	\end{minipage}}
	\vskip -0.1in
	\caption{Comparison of different generative modeling methods over GMM synthetic data. 
		Stochastic generations from GAN with logD trick, WGAN-GP, Exclusive-NRF, \textbf{Inclusive-NRF generation} (i.e. sampling from the auxiliary generator) and \textbf{Inclusive-NRF revision} (i.e. after sample revision), are shown in (b)-(f) respectively.
		Each generation contains 1,000 samples.
		The learned potentials $u_\theta(x)$ from exclusive and inclusive NRFs are shown in (g) and (h) respectively, where the red dots indicate the mean of each Gaussian component.
		Inclusive NRFs are clearly superior in learning data density and sample generation.
	}
	\vskip -0.0in
	\label{fig:toy}
\end{figure*}

On the other hand, there are interesting connections between inclusive-NRFs and GANs, as elaborated in Appendix \ref{sec:connection-NRF-GAN}.
When interpreting the potential function $u_\theta(x)$ as the critic in Wasserstein GANs \cite{arjovsky2017wasserstein}, inclusive-NRFs seem to be similar to Wasserstein GANs. A difference is that in optimizing $\theta$ in inclusive-NRFs, the generated samples are further revised by taking finite-step-gradient of $u_\theta(x)$ w.r.t. $x$.
However, the critic in Wasserstein GANs can hardly be interpreted as an unnormalized log-density. Thus strictly speaking, inclusive-NRFs are not GAN-like.
From a different perspective, another interesting connection between kernel exponential family and MMD-GANs \cite{li2017mmd} is found in \cite{altun2006unifying,dai2018kernel}.

\subsection{Anomaly detection}

To evaluate the performance of generative models in density estimate, anomaly detection is a good real-world benchmarking task.
Anomaly detection (also known as one-class classification \cite{OC-SVM}) is a fundamental problem in machine learning, with critical applications in many areas, such as cybersecurity, complex system management, medical care, and so on. At the core of anomaly detection is density estimation: given a lot of input samples, anomalies are those ones residing in low probability density areas.

Anomaly detection has been extensively studied, as reviewed in recent works \cite{OC-SVM,SVDD,DSVDD,DSEBM,DAGMM,ALAD}.
Classical anomaly detection methods are kernel-based, e.g. One-Class Support Vecter Machine (OC-SVM) \cite{OC-SVM} and Support Vector Data Description (SVDD) \cite{SVDD}.
Such shallow methods typically require substantial feature engineering and also limited by poor computational scalability.
Recent methods leverage feature learning by using deep neural networks. Deep SVDD (DSVDD) \cite{DSVDD} combines a deep neural network with kernel-based SVDD.
In Deep Structured Energy-based Model (DSEBM) (DSEBM) \cite{DSEBM}, the models are essentially NRFs, but the training method is score matching \cite{hyvarinen2005estimation}.
Deep Autoencoding Gaussian Mixture Model (DAGMM) \cite{DAGMM} jointly train a deep autoencoder (which generates low-dimensional features) and a GMM (which operates on those low-dimensional features).
Adversarially Learned Anomaly Detection (ALAD) \cite{ALAD} uses a bi-directional GAN which needs one more network (the inference network) in addition to the generator and the discriminator networks.
In practice, the detection is usually performed by thresholding reconstruction errors (as used in DSEBM, ALAD) or density estimates (as used in DSEBM, DAGMM).
Both criteria are tested for DSEBM, denoted by DSEBM-r (reconstruction) and DSEBM-e (energy).
It is found that the energy score is a more accurate decision criterion than the reconstruction error \cite{DSEBM}.

	\section{Experiments} \label{sec:experiments} 
	
First, we report experimental result on synthetic dataset, which helps to illustrate different generative models and learning methods.
Then, extensive experiments are conducted to evaluate the performances of our approach (inclusive-NRFs) and various existing methods for image generation and anomaly detection on real-world datasets.
We refer to Appendix \ref{sec:exp-details} for experimental details and additional results.

	\begin{table}\normalsize
	\centering
	\caption{Numerical evaluations over the GMM (32 components) synthetic data. 
		The ``{covered modes}'' metric is defined as the number of covered modes by a set of generated samples.
		The ``{realistic ratio}'' metric is defined as the proportion of generated samples which are close to a mode.
		The measurement details are presented in text.
		Mean and SD are from 10 independent runs.
		{"Oracle" denotes the method of just drawing samples from the training data, which can be viewed as the top-line.}
	}
	\label{tab:toy}
	\begin{tabular}{lcc}
		\toprule
		Methods &covered modes  &realistic ratio \\
		\midrule
		GAN with logD trick \cite{goodfellow2014generative} &$21.47\pm1.44$ &$0.83\pm0.02$\\
		WGAN-GP \cite{Gulrajani2017ImprovedTO}&$22.21\pm1.75$ &$0.48\pm0.06$\\
		Exclusive-NRF \cite{Kim2016DeepDG}&$23.73\pm1.13$ &$0.50\pm0.03$\\  
		\textbf{Inclusive-NRF generation}    &$27.46\pm1.34$ &$0.67\pm0.08$\\
		\textbf{Inclusive-NRF revision} &$30.65\pm0.04$ &$0.95\pm0.01$\\
		Oracle&$30.70\pm0.09$ &$0.99\pm0.00$\\
		\bottomrule
	\end{tabular}
\end{table}
	
	\subsection{GMM synthetic experiment}\label{sec:GMM}
	
	The synthetic data consist of 1,600 training examples generated from a 2D Gaussian mixture model (GMM) with 32 equally-weighted, low-variance ($\sigma=0.1$) Gaussian components, uniformly laid out on four concentric circles as in Figure \ref{fig:toy}(a). 
	The data distribution exhibits many modes separated by large low-probability regions, which makes it suitable to examine how well different learning methods can deal with multiple modes.
	For comparison, we experiment with GAN with logD trick \cite{goodfellow2014generative} and  WGAN-GP \cite{Gulrajani2017ImprovedTO} for directed generative model, exclusive-NRF \cite{Kim2016DeepDG}, inclusive-NRF and CD for undirected generative model.

	The neural network architectures and hyperparameters for different methods are the same, as listed in Table \ref{tab:gmm-detail} in Appendix.
	We use SGLD \cite{sgld} for inclusive-NRFs on this synthetic dataset, with sample revision steps $L=10$ and empirical revision hyperparameters $\delta_l=0.01$.
	
	Figure \ref{fig:toy}(b)-(f) visually shows the generated samples from the trained models using different methods.
	For NRFs trained with different methods, i.e. the exclusive-NRF method, the inclusive-NRF method and the CD method, we also show the learned NRF potential $u_\theta(x)$ in Figure \ref{fig:toy}(g)(f) and Figure \ref{fig:CD-potential-map} (in Appendix) respectively.	
	Table \ref{tab:toy} reports the ``covered modes'' and ``realistic ratio'' as numerical measures of how the multi-modal data are fitted, 
	similarly as in \cite{dumoulin2016adversarially}.
	For all the methods in Table \ref{tab:toy} (including ``Oracle'', which denotes the method of just drawing samples from the training data), we use the following same procedure to estimate the metrics ``covered modes'' and ``realistic ratio''.
	\begin{enumerate}
		\item Stochastically generate 100 samples.
		
		\item A mode is defined to be covered (not missed) if there exist {at least one} generated sample located closely to the mode (with distance {$<3\sigma$}), and those samples are said to be realistic.
				
		\item Count how many modes are covered and calculate the proportion of realistic samples.
		
		\item Repeat the above steps 100 times and perform averaging.
	\end{enumerate}

	For each method, we independently train 10 models and calculate the mean and standard deviation (SD) across the 10 independent runs.
	For the ``Oracle'' method, we do not train models, but independently run the above procedure for 10 times.
	The main observations are as follows:
\begin{itemize}
\item GAN suffers from mode missing, generating realistic but not diverse samples.
	WGAN-GP increases ``covered modes'' but decreases ``realistic ratio''.
	Inclusive-NRF performs much better than both GAN and WGAN-GP in sample generation.
\item Inclusive-NRF outperforms exclusive-NRF in both sample generation and density estimation.
\item After revision, samples from inclusive-NRF become more like real samples, achieving the best in both ``covered modes'' and ``realistic ratio'' metrics.
\item As shown in Figure \ref{fig:CD-potential-map} in Appendix, the learned potential from applying the CD method to train the NRF yields a poor estimate of the data density, much worse than our inclusive-NRF approach.
\end{itemize}

Remarkably, the superior performance of our inclusive-NRF approach in sample generation and density estimate in this synthetic experiment with many separated modes is a manifest of the efficiency in mixing between separate modes of our stochastic gradient sampler, which embodies both auxiliary variable MCMC and adaptive MCMC.
	
	\begin{table}
	\centering
	\caption{
		Inception score (IS) \cite{imporveGAN} and Frechet inception distance (FID) \cite{fid}
		results for image generation on CIFAR-10.
		"-" means the results are not reported in the original work.
On CIFAR-10, there are two widely benchmarked neural network architectures, referred to as CNNs \cite{cnn} and ResNets \cite{resnet}, which are grouped in the upper and lower blocks respectively.
The network architectures used by the methods in the same block are close. see Appendix \ref{sec:detail-cifar-10} for experimental details.
	}
	\label{is_unsupervised}
	\begin{tabular}{l|lcc}
		\toprule
		&Methods&IS$\uparrow$&FID$\downarrow$\\
		\midrule
		\multirow{4}*{CNNs}&DCGAN \cite{radford2015unsupervised}&$6.16\pm0.07$&-\\
		&CoopNet \cite{coopnets}&-&$33.61$\\
		&SNGAN \cite{Miyato2018SpectralNF}&$7.58\pm0.12$&25.5\\
		&\textbf{Inclusive-NRF generation} &$7.54\pm0.10$&$27.9\pm0.53$\\
		\midrule
			\multirow{7}*{ResNets}&WGAN-GP \cite{Gulrajani2017ImprovedTO} &$7.86\pm0.07$&-\\
			&CT-GAN \cite{ct-gan}&$8.12\pm0.12$&-\\
			&Fisher-GAN \cite{fisher-gan}&$7.90\pm0.05$&-\\
			&BWGAN \cite{bwgan}&$8.26\pm0.07$&-\\
			&IGG-EBM \cite{IGG-EBM} &6.78&38.2\\
		&SNGAN \cite{Miyato2018SpectralNF}&$8.22\pm0.05$&$21.7\pm0.21$\\
		&\textbf{Inclusive-NRF generation} &$8.28\pm0.09$&$20.9\pm0.25$\\
		\bottomrule
	\end{tabular}
\end{table}

	\subsection{Image generation on CIFAR-10}
	
	In this experiment, we evaluate the performance of different generative models in image generation over the widely used real-world dataset CIFAR-10 \cite{krizhevsky2009learning}.
	Apart from visual inspection of generated samples, we use inception score (IS) \cite{imporveGAN} (the larger the better) and Frechet inception distance (FID) \cite{fid} (the smaller the better)\footnote{IS \cite{imporveGAN} is based on the classification output $p(y|x)$ from the Inception model \cite{szegedy2016rethinking}. Defined as $\exp \left[ E_x KL[p(y|x) || p(y)] \right] $, IS is highest when each image's predictive distribution has low entropy (measuring realistic), and the marginal predictive distribution $p(y)=E_x p(y|x)$ has high entropy (measuring ``mode covering'' or say diversity). IS correlates somewhat with human judgement of sample quality on natural images.
		FID \cite{fid} fits a Gaussian distribution to the samples' representations (the 2048-dimensional hidden activations at the $pool3$ layer in the Inception model) for the data samples and generated samples respectively, and then calculates the Frechet distance between the two Gaussians.} to quantitatively evaluate the generation quality.
Evaluation of generative models is non-trivial and it is important that evaluation metrics matches the target applications \cite{theis2016a}.
	We believe that the combination of IS and FID can serve for this evaluation purpose in image generation, and can reflect the generation performance of different models in ``covered modes'' and ``realistic ratio'' metrics.
	
	
	Table \ref{is_unsupervised} reports the IS and FID results for various methods.
	On CIFAR-10, there are two widely benchmarked neural network architectures, referred to as CNNs \cite{cnn} and ResNets \cite{resnet}. ResNets are larger than CNNs.
	To isolate the effect of network architectures on the performance by different methods, we group the results in two corresponding blocks.
	The network architectures used by the methods in the same block are close. See Appendix \ref{sec:detail-cifar-10} for experimental details.
	
	From the results in Table \ref{is_unsupervised}, it can be seen that the proposed inclusive-NRF models achieve state-of-the-art performance in both settings of CNNs and ResNets, and even outperform SNGAN \cite{Miyato2018SpectralNF} in the setting of using large ResNets.
	Remarkably, compared to both the recent IGG-EBM \cite{IGG-EBM} which learns NRFs without auxiliary models and the recent CoopNet \cite{coopnets} which learns NRFs with auxiliary generator, inclusive-NRFs obtain significantly better results.
	The superiority of inclusie-NRF models in sample generation in GMM synthetic experiment is further substantiated in real-world image generation.
Some generated samples are shown in Figure \ref{fig:generate} in Appendix.
	We also show the capability of inclusive-NRFs in latent space interpolation (Appendix \ref{sec:latent-space-interp}).

It is worthwhile to comment on the time complexity of different training methods.
Compared to training GANs, the extra cost for training inclusive-NRFs involves running SGLD/SGHMC for $L$ steps to obtain samples.
Once the samples are obtained, the cost of parameter updating is the same as in GANs.
For the results in Table \ref{is_unsupervised}, we use SGLD with $L=1$ and spectral normalization, which is a good trade-off between computation cost and model performance.
Specifically, when evaluated in training on CIFAR-10 with ResNets architecture, the wall-clock training time is $22$ hours for SNGAN \cite{Miyato2018SpectralNF} on $1$ Nvidia P100 GPU, $35$ hours for inclusive-NRF on $1$ Nvidia P100 GPU, and $160$ hours for IGG-EBM \cite{IGG-EBM} on $4$ Nvidia P100 GPUs.
	
	

\begin{table}
	\centering
	\caption{Ablation study of our inclusive-NRF method on CIFAR-10, regarding the effects of using SGLD or SGHMC in training.
		Mean and SD are from 5 independent runs for each training setting.
		In each training setting, two manners to generate samples given a trained NRF are compared, as previously illustrated in Figure \ref{fig:toy} over synthetic GMM data.
		We examine generated samples (i.e. directly from the generator) and revised samples (i.e. after sample revision) respectively, in term of inception scores (IS).		
	}
	\label{tab:ablation}
	\begin{tabular}{lcc}
		\toprule
		Training Setting&Generation IS&Revision IS\\
		\midrule
		SGLD $L=2$&$7.38\pm0.07$&$7.43\pm0.06$\\
		SGLD $L=5$&$7.45\pm0.16$&$7.47\pm0.10$\\
		SGLD $L=10$&$7.45\pm0.13$&$7.48\pm0.12$\\
		SGHMC $L=10$&$7.47\pm0.08$&$7.56\pm0.06$\\
		\bottomrule
	\end{tabular}
\end{table}

\subsubsection{Ablation study}
\label{sec:ablation}

We report the results of ablation study of our inclusive-NRF method on CIFAR-10 in Table \ref{tab:ablation}.
In this experiment, we use the CNN network architecture with batch normalization. See Appendix \ref{sec:detail_ablation} for experimental details.
We analyze the effects of different settings in model training, such as using SGLD or SGHMC and the revision step $L=2/5/10$ used. For each training setting, we also compare the two manners to generate samples - whether applying sample revision or not in inference (i.e. generating samples) given a trained NRF, as previously illustrated in Figure \ref{fig:toy} over synthetic GMM data.
The main observations are as follows.

First, given a trained inclusive-NRF, after revision (i.e. following the gradient of the potential $u_\theta(x)$ w.r.t. $x$), the quality (measured by IS) of samples is always improved, as shown by the consistent IS improvement from the second column (generation) to the third (revision).
This is in accordance with the results in the GMM synthetic experiments in Section \ref{sec:GMM}.
Note that in revision, it is the estimated density $p_\theta$ that guides the samples towards low energy region of the random field. Thus, the improved sample quality after revision demonstrates the capability of our inclusive-NRFs in density estimate from training data.

Second, a row-wise reading of Table \ref{tab:ablation} shows that using SGLD with more revision steps in training consistently improve the sample quality.
Further using SGHMC in training to exploit gradient information with momentum yields better performance than simple SGLD.
These results reveal the benefit of our new development of stochastic gradient samplers, especially SGHMC.
In our experiments, it is also found that when using spectral normalization, the effect of increasing revision steps and using momentum on model performance is less significant.
Presumably, spectral normalization may obscure the true learning dynamics as discussed in \cite{nijkamp2019anatomy}\footnote{Note that the Lipschitz norm of the potential $u_\theta(x)$ represents the maximum gradient strength and plays a central role in stochastic gradient sampling through the term $\delta_l \Delta (z^{(l-1)};\lambda)$ as shown in Eq. (\ref{eq:SGLD})(\ref{eq:SGHMC}).
	And the Lipschitz norm of the potential is related to the network spectral norm.
	Thus, the stochastic gradient sampler may be disturbed by the external spectral normalization technique.}.


		\begin{table*}\normalsize
	\centering
	\caption{Anomaly detection results on KDDCUP dataset.
		Results for OC-SVM, DSEBM-e, DAGMM are taken from \cite{DAGMM}.
		Inclusive-NRF results are obtained from 20 runs, each with random split of training and test sets, as in \cite{DAGMM}.
		ALAD result uses a fixed split with random parameter initializations, and thus has smaller standard deviations.
	}
	\label{tab:ad_kddcup}
	\begin{tabular}{cccc}
		\toprule
		Model&Precision&Recall&F1\\
		\midrule
		OC-SVM \cite{OC-SVM}&0.7457&0.8523&0.7954\\
		DSEBM-e \cite{DSEBM}&0.7369&0.7477&0.7423\\
		DAGMM \cite{DAGMM}&0.9297&0.9442&0.9369\\
		ALAD \cite{ALAD}&$0.9427\pm0.0018$&$0.9577\pm0.0018$&$0.9501\pm0.0018$\\
		Inclusive-NRF SGLD&$0.9452\pm0.0105$&$0.9600\pm0.0113$&$0.9525\pm0.0108$\\
		Inclusive-NRF SGHMC&$\mathbf{0.9501\pm0.0043}$&$\mathbf{0.9651\pm0.0062}$&$\mathbf{0.9575\pm0.0050}$\\
		\bottomrule
	\end{tabular}
\end{table*}

\begin{table*}\normalsize
	\centering
	\caption{Anomaly detection results on MNIST and CIFAR-10 measured by AUCs (\%).
		Both datasets have ten different classes from which we create ten one-class classification setups.
		Results for each one-class setup are obtained from 10 runs (with random parameter initializations), and averaging the mean AUCs over the ten setups gives the ``mean'' result.
		Results for OC-SVM, KDE (Kernel density estimation), IF (Isolation Forest), DCAE (Deep Convolutional AutoEncoders), AnoGAN, DSVDD are taken from \cite{DSVDD}.
	}
	\label{tab:ad_mnist_cifar}
	\begin{tabular}{lcccccccc}
		\toprule
		\multirow{2}*{Normal Class}&\multirow{2}*{OC-SVM}&\multirow{2}*{KDE}&\multirow{2}*{IF}&\multirow{2}*{DCAE}&\multirow{2}*{AnoGAN}&\multirow{2}*{DSVDD}&\multicolumn{2}{c}{Inclusive-NRF}\\
		&&&&&&&SGLD&SGHMC\\
		\midrule
		Digit 0&98.6$\pm$0.0&97.1$\pm$0.0&98.0$\pm$0.3&97.6$\pm$0.7&96.6$\pm$1.3&98.0$\pm$0.7&\textbf{98.9$\pm$0.7}&\textbf{98.9$\pm$0.6}\\
		Digit 1&99.5$\pm$0.0&98.9$\pm$0.0&97.3$\pm$0.4&98.3$\pm$0.6&99.2$\pm$0.6&99.7$\pm$0.1&\textbf{99.8$\pm$0.1}&\textbf{99.8$\pm$0.1}\\
		Digit 2&82.5$\pm$0.1&79.0$\pm$0.0&88.6$\pm$0.5&85.4$\pm$2.4&85.0$\pm$2.9&91.7$\pm$0.8&\textbf{93.2$\pm$3.2}&91.8$\pm$4.0\\
		Digit 3&88.1$\pm$0.0&86.2$\pm$0.0&89.9$\pm$0.4&86.7$\pm$0.9&88.7$\pm$2.1&91.9$\pm$1.5&93.4$\pm$2.0&\textbf{93.8$\pm$2.6}\\
		Digit 4&94.9$\pm$0.0&87.9$\pm$0.0&92.7$\pm$0.6&86.5$\pm$2.0&89.4$\pm$1.3&94.9$\pm$0.8&\textbf{95.6$\pm$1.2}&\textbf{95.6$\pm$1.9}\\
		Digit 5&77.1$\pm$0.0&73.8$\pm$0.0&85.5$\pm$0.8&78.2$\pm$2.7&88.3$\pm$2.9&88.5$\pm$0.9&89.4$\pm$6.5&\textbf{94.9$\pm$1.4}\\
		Digit 6&96.5$\pm$0.0&87.6$\pm$0.0&95.6$\pm$0.3&94.6$\pm$0.5&94.7$\pm$2.7&98.3$\pm$0.5&\textbf{98.6$\pm$0.5}&97.5$\pm$2.7\\
		Digit 7&93.7$\pm$0.0&91.4$\pm$0.0&92.0$\pm$0.4&92.3$\pm$1.0&93.5$\pm$1.8&94.6$\pm$0.9&96.1$\pm$1.1&\textbf{96.4$\pm$1.0}\\
		Digit 8&88.9$\pm$0.0&79.2$\pm$0.0&89.9$\pm$0.4&86.5$\pm$1.6&84.9$\pm$2.1&\textbf{93.9$\pm$1.6}&85.1$\pm$6.7&88.9$\pm$3.3\\
		Digit 9&93.1$\pm$0.0&88.2$\pm$0.0&93.5$\pm$0.3&90.4$\pm$1.8&92.4$\pm$1.1&\textbf{96.5$\pm$0.3}&93.8$\pm$0.8&94.9$\pm$1.0\\
		\textbf{Mean}&91.29&86.93&92.30&89.65&91.27&94.80&94.38&\textbf{95.26}\\
		\midrule
		AIRPLANE&61.6$\pm$0.9&61.2$\pm$0.0&60.1$\pm$0.7&59.1$\pm$5.1&67.1$\pm$2.5&61.7$\pm$4.1&78.1$\pm$2.1&\textbf{80.3$\pm$2.8}\\
		AUTOMOBILE&63.8$\pm$0.6&64.0$\pm$0.0&50.8$\pm$0.6&57.4$\pm$2.9&54.7$\pm$3.4&65.9$\pm$2.1&71.6$\pm$2.1&\textbf{87.0$\pm$1.8}\\
		BIRD&50.0$\pm$0.5&50.1$\pm$0.0&49.2$\pm$0.4&48.9$\pm$2.4&52.9$\pm$3.0&50.8$\pm$0.8&65.4$\pm$1.8&\textbf{68.9$\pm$2.0}\\
		CAT&55.9$\pm$1.3&56.4$\pm$0.0&55.1$\pm$0.4&58.4$\pm$1.2&54.5$\pm$1.9&59.1$\pm$1.4&63.3$\pm$1.9&\textbf{64.9$\pm$2.5}\\
		DEER&66.0$\pm$0.7&66.2$\pm$0.0&49.8$\pm$0.4&54.0$\pm$1.3&65.1$\pm$3.2&60.9$\pm$1.1&70.5$\pm$2.2&\textbf{72.7$\pm$1.8}\\
		DOG&62.4$\pm$0.8&62.4$\pm$0.0&58.5$\pm$0.4&62.2$\pm$1.8&60.3$\pm$2.6&65.7$\pm$2.5&64.1$\pm$2.7&\textbf{66.9$\pm$4.0}\\
		FROG&74.7$\pm$0.3&74.9$\pm$0.0&42.9$\pm$0.6&51.2$\pm$5.2&58.5$\pm$1.4&67.7$\pm$2.6&75.4$\pm$2.4&\textbf{76.2$\pm$2.3}\\
		HORSE&62.6$\pm$0.6&62.6$\pm$0.0&55.1$\pm$0.7&58.6$\pm$2.9&62.5$\pm$0.8&67.3$\pm$0.9&66.1$\pm$3.7&\textbf{75.4$\pm$3.1}\\
		SHIP&74.9$\pm$0.4&75.1$\pm$0.0&74.2$\pm$0.6&76.8$\pm$1.4&75.8$\pm$4.1&75.9$\pm$1.2&75.6$\pm$2.8&\textbf{78.7$\pm$1.7}\\
		TRUCK&75.9$\pm$0.3&76.0$\pm$0.0&58.9$\pm$0.7&67.3$\pm$3.0&66.5$\pm$2.8&73.1$\pm$1.2&70.1$\pm$2.0&\textbf{78.8$\pm$2.1}\\
		\textbf{Mean}&64.78&64.89&55.46&59.39&61.79&64.81&70.02&\textbf{74.98}\\
		\bottomrule
	\end{tabular}
\end{table*}

\subsection{Anomaly detection}

To evaluate the performance of different generative models in density estimate, anomaly detection is a good real-world benchmarking task, in addition to the GMM synthetic experiments (Section \ref{sec:GMM}).

In applying inclusive-NRFs to anomaly detection, the unnormalized density estimates (as measured by potential values) provide a natural decision criterion, since the normalizing constant only introduces a constant in thresholding.
Specifically, after training inclusive-NRFs on data containing only the samples of the normal class, testing samples with potential values lower than a threshold are detected as anomaly.
We evaluated our inclusive-NRFs for anomaly detection on publicly available tabular and image datasets - KDDCUP, MNIST and CIFAR-10. See Appendix \ref{sec:detail_anomaly} for experimental details.

	\textbf{KDDCUP.} For tabular data, we test on the KDDCUP99 ten percent dataset \cite{KDDCUP} (denoted as KDDCUP). 
	This dataset is a network intrusion dataset, originally contains samples of 41 dimensions, where 34 of them are continuous and 7 are categorical.
	One-hot representation is used to encode categorical features, and eventually a dataset of 120 dimensions is obtained. 
	As 20\% of data samples are labeled as ``normal'' and the rest are labeled as ``attack'', ``normal'' ones are thus treated as anomalies in this task.
	For each run, we randomly take 50\% of the whole dataset and use only data samples from the normal class for training models; the rest 50\% is reserved for testing.
	During testing, the 20\% samples with lowest potentials are marked as anomalies.
	The anomaly class is regarded as positive, and precision, recall, and F1 score are calculated accordingly.
Here we follow the setup in prior work \cite{OC-SVM,DSEBM,DAGMM,ALAD} so that the results are directly comparable.
	 
	From the results shown in Table \ref{tab:ad_kddcup}, it can be seen that inclusive-NRF outperforms all state-of-the-art methods (DSEBM-e, DAGMM, ALAD).
	Particularly, compared to the previous state-of-the-art deep energy model (DSEBM-e), inclusive-NRF outperforms by a large margin (above +0.2 F1 score), which clearly shows the superiority of our new approach in learning NRFs.
	The performance of inclusive-NRF with SGHMC is much better than with SGLD, which shows the superiority of our SGHMC sampler compared to Langevin dynamics used in prior work of learning NRFs \cite{coopnets}.
	
	\textbf{MNIST and CIFAR-10.} 
	Both datasets have ten different classes from which we create ten one-class classification setups.
	The standard training and test splits of MNIST and CIFAR-10 are used, and we follow the ``one-class'' setup in \cite{DSVDD}.
	For each setup, one of the classes is the normal class and samples from the remaining classes are treated as anomalies; only training samples from the normal class are employed for model training. 
	Accordingly, for each setup, training set sizes are 6000 for MNIST and 5000 for CIFAR-10, and the test set consists of 1000 normal samples and 9000 abnormal samples. 
	For comparison with existing results, AUC (area under the receiver operating curve) \cite{AUC} is employed as the performance metric in both datasets.
	From the results shown in Table \ref{tab:ad_mnist_cifar}, it can be seen that inclusive-NRF performs much better state-of-the-art method (DSVDD) which is specifically designed for anomaly detection.
	Again, learning inclusive-NRFs with SGHMC obtains better performance than with SGLD, especially on CIFAR-10 dataset which is more challenging than MNIST.

To sum up, here we show that a straightforward application of the inclusive-NRF approach in anomaly detection on both tabular and image datasets achieves superior performance over state-of-the-art methods.
	This is a clear indication of the ability of inclusive-NRFs for density estimation.
	
\section{Conclusion and future directions} \label{sec:discussion}

Neural random fields (NRFs), referring to a class of generative models that use neural networks to implement potential functions in random fields, are not new but receive less attention with slow progress, though with their own merits.
In this paper we propose a new approach, the inclusive-NRF approach, to learning NRFs for continuous data (e.g. images).
On top of prior work, the inclusive-NRF approach is the first in learning NRFs for continuous data by minimizing the inclusive-divergence and developing stochastic gradient sampling, with theoretical analysis.

Based on the new approach, specific inclusive-NRF models are developed and thoroughly evaluated in two important generative modeling applications - image generation and anomaly detection. The proposed models consistently achieve strong experimental results in both applications compared to state-of-the-art methods.
These superior performances presumably are attributed to the two distinctive features in inclusive-NRFs - introducing the inclusive-divergence minimized auxiliary generator and performing model sampling by SGLD/SGHMC in the augmented space.
The auxiliary generator acts like an adaptive proposal, updated by using SGLD/SGHMC samples from the extended target density. 
Minimizing the inclusive-divergence tends to drive the auxiliary generator to have higher entropy than the extended target density, which is a desirable property for proposal design in MCMC.
These features enable more efficient sampling and thus more successful learning of NRFs.

There are several worthwhile directions for future research. First, the flexibility of the inclusive-NRF approach is worth emphasizing.
We anticipate the application of the inclusive-NRF approach to more generative modeling tasks.
Second, although this work deals with fix-dimensional data, it is an important direction of extending the inclusive-NRF approach to sequential and trans-dimensional modeling tasks, such as speech, language, video and so on.

\section{Acknowledgments} \label{sec:acknowledgments}
This work was supported by NSFC Grant 61976122, China MOE-Mobile Grant MCM20170301. The authors would like to thank Zhiqiang Tan for helpful discussions.

	\bibliographystyle{IEEEtran}
	\bibliography{NRF-NNLS}
	
\begin{IEEEbiography}[{\includegraphics[width=1in,height=1.25in,clip,keepaspectratio]{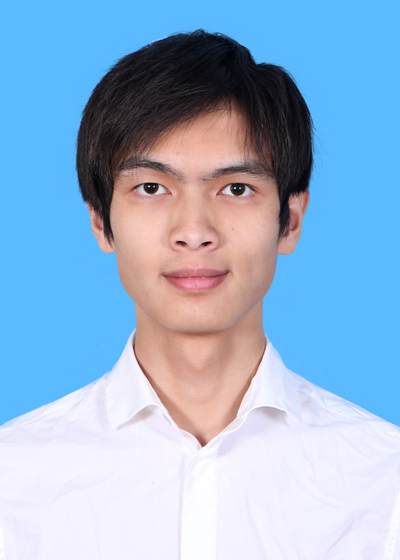}}]{Yunfu Song}
	received the B.S. degree in Physics from Tsinghua University in 2017.
	Since 2017, he has been pursuing a master degree at the Department of Electronic Engineering, Tsinghua University under the supervision of Zhijian Ou.
	His research focuses on learning with undirected graphical models and neural networks.
\end{IEEEbiography}
\vspace{-30pt}
\begin{IEEEbiography}[{\includegraphics[width=1in,height=1.25in,clip,keepaspectratio]{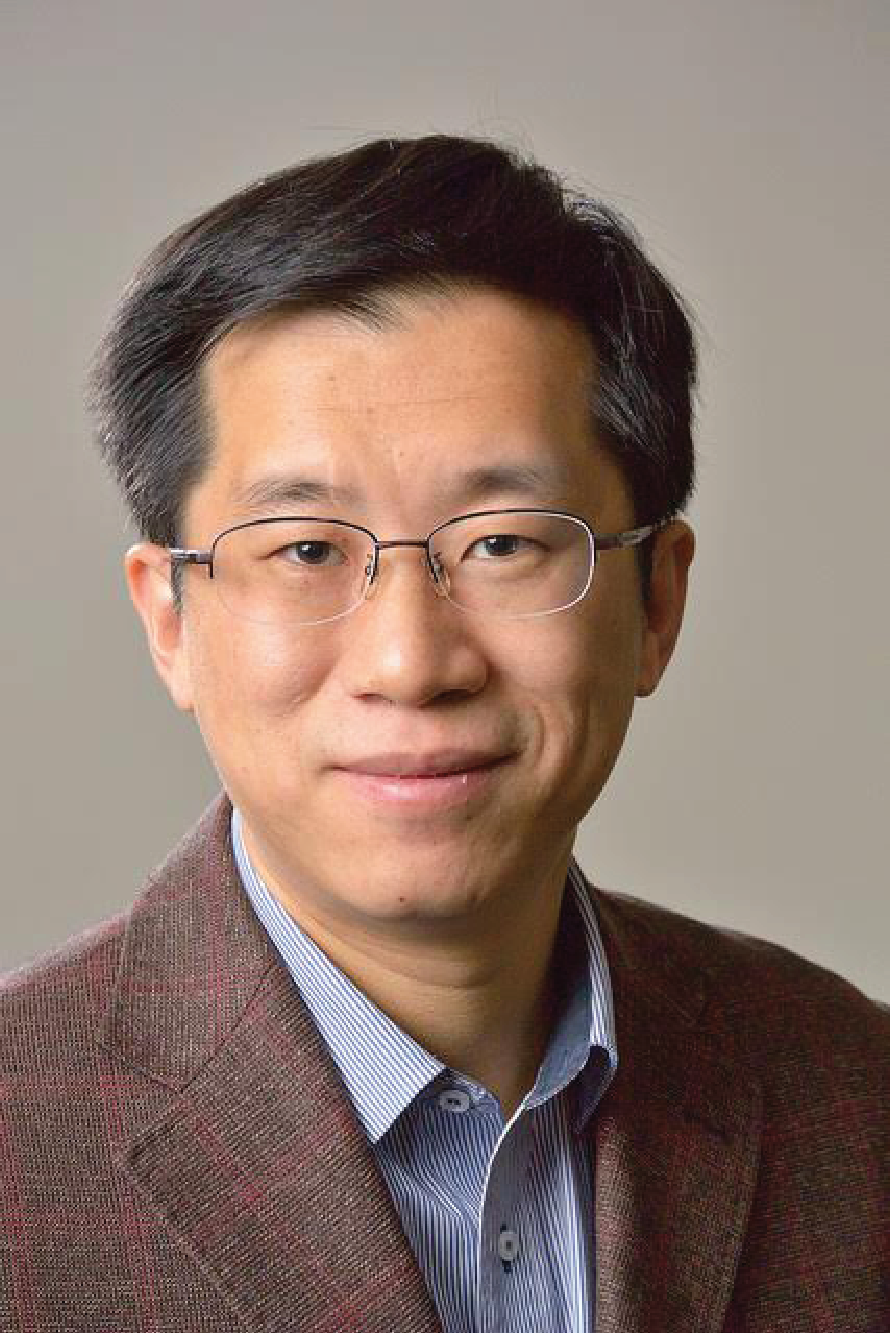}}]{Zhijian Ou}
	received the B.S. degree (with the highest honor) in electronic engineering from Shanghai Jiao Tong University in 1998 and the Ph.D. degree in electronic engineering from Tsinghua University in 2003.
	Since 2003, he has been with the Department of Electronic Engineering in Tsinghua University and is currently an associate professor. From August 2014 to July 2015, he was a visiting scholar at Beckman Institute, University of Illinois at Urbana-Champaign.
	He has been an associate editor of IEEE/ACM Transactions on Audio, Speech and Language Processing and a Member of IEEE Speech and Language Processing Technical Committee, since 2019.
	He has actively led research projects from NSFC, China 863 High-tech Research and Development Program, China Ministry of Information Industry and China Ministry of Education, as well as joint-research projects with Intel, Panasonic, IBM, and Toshiba.
	His recent research interests include speech and language processing (speech recognition and understanding, speaker recognition, natural language processing) and machine intelligence (particularly with graphical models).
\end{IEEEbiography}

	\normalsize

\clearpage





%
%


	\appendices

\section{Proof of Proposition \ref{prop:nrf_gradient_proof}}
\label{sec:proof-prop-1}

\begin{proof}
	The first line of Eq.(\ref{eq:jrf_unsup_gradient}) can be obtained by directly taking derivative of $KL\left[  \tilde{p}(\tilde{x}) || p_\theta(\tilde{x}) \right]$ w.r.t. $\theta$, as shown below,
	\begin{displaymath}
	\begin{aligned}
	\frac{\partial}{\partial \theta} KL\left[  \tilde{p}(\tilde{x}) || p_\theta(\tilde{x}) \right] &= \frac{\partial}{\partial \theta} \int \tilde{p}(\tilde{x}) \log  \frac{\tilde{p}(\tilde{x})}{p_\theta(\tilde{x})} d \tilde{x} \\&= - \int \tilde{p}(\tilde{x}) \frac{\partial}{\partial \theta} \log p_\theta(\tilde{x}) d \tilde{x},
	\end{aligned}
	\end{displaymath}
	and then applying the basic formula of Eq. (\ref{eq:RF-grad}).
	
	For the second line, by direct calculation, we first have
	\begin{displaymath}
	\begin{aligned}
	&E_{q_\phi(h|x)}\left[ \nabla_\phi \log q_\phi(h|x) \right] \\=& \int q_\phi(h|x) q_\phi(h|x)^{-1} \nabla_\phi q_\phi(h|x) dh
	\\=& \int \nabla_\phi q_\phi(h|x) dh
	= \nabla_\phi \int q_\phi(h|x) dh
	= \nabla_\phi 1 = 0.
	\end{aligned}
	\end{displaymath}
	
	Then combining 
	\begin{displaymath}
	\frac{\partial}{\partial \phi} KL\left[  p_\theta(x) || q_\phi(x) \right] = - E_{p_\theta(x)}\left[ \nabla_\phi \log q_\phi(x)\right]
	\end{displaymath} 
	and
	\begin{displaymath}
	\begin{aligned}
	\nabla_\phi \log q_\phi(x)
	&=E_{q_\phi(h|x)}\left[ \nabla_\phi \log q_\phi(x)\right]
	\\&=E_{q_\phi(h|x)}\left[ \nabla_\phi \log q_\phi(x,h) - \nabla_\phi \log q_\phi(h|x) \right]\\
	&=E_{q_\phi(h|x)}\left[ \nabla_\phi \log q_\phi(x,h)\right].
	\end{aligned}
	\end{displaymath}
	will give the second line of Eq.(\ref{eq:jrf_unsup_gradient}).
\end{proof}

\section{Proof of Proposition \ref{prop:unbiased-est}}
\label{sec:proof-prop-2}
\begin{proof}
	\begin{displaymath}
	\begin{aligned}
	\frac{\partial}{\partial x} \log q_\phi(x)
	&=E_{q_\phi(h^*|x)}\left[ \frac{\partial}{\partial x} \log q_\phi(x)\right]
	\\&=E_{q_\phi(h^*|x)}\left[\frac{\partial}{\partial x} \log q_\phi(x,h^*) - \frac{\partial}{\partial x} \log q_\phi(h^*|x) \right]\\
	&=E_{q_\phi(h^*|x)}\left[ \frac{\partial}{\partial x} \log q_\phi(x,h^*)\right].
	\end{aligned}
	\end{displaymath}
\end{proof}
		
	\section{Proof of Proposition \ref{prop:Kim-Bengio} (Exclusive-NRF)}
	\label{sec:proof-kim-bengio}
	
	\begin{prop} \label{prop:Kim-Bengio}
		For the RF as defined in Eq. (\ref{eq:unsup-RF}), we have the following evidence upper bound:  
		\begin{equation} \label{eq:exclusive-bound}
		\begin{aligned}
		log p_\theta(\tilde{x}) &= \mathcal{U}(\tilde{x};\theta,\phi) - KL(q_\phi(x)||p_\theta(x)) \\
		&\le \mathcal{U}(\tilde{x};\theta,\phi),
		\end{aligned}
		\end{equation}
	where $\mathcal{U}(\tilde{x};\theta,\phi) \triangleq u_\theta(\tilde{x}) - \left(  E_{q_\phi(x)}\left[ u_\theta(x)\right]
	+ H \left[ q_\phi(x) \right] \right).$	
	\end{prop}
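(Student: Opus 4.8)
The plan is to start from the definition $p_\theta(x) = \frac{1}{Z(\theta)}\exp[u_\theta(x)]$, so that $\log p_\theta(\tilde x) = u_\theta(\tilde x) - \log Z(\theta)$, and then show that $\log Z(\theta)$ admits the variational representation $\log Z(\theta) = E_{q_\phi(x)}[u_\theta(x)] + H[q_\phi(x)] + KL(q_\phi(x)\,\|\,p_\theta(x))$. Substituting this identity into the expression for $\log p_\theta(\tilde x)$ immediately yields the claimed equality $\log p_\theta(\tilde x) = \mathcal{U}(\tilde x;\theta,\phi) - KL(q_\phi(x)\,\|\,p_\theta(x))$, and the inequality follows from non-negativity of the KL divergence.

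The key step is therefore establishing the variational identity for $\log Z(\theta)$. I would do this by direct manipulation: expand
\begin{displaymath}
\begin{aligned}
KL(q_\phi(x)\,\|\,p_\theta(x)) &= \int q_\phi(x)\log\frac{q_\phi(x)}{p_\theta(x)}\,dx \\
&= -H[q_\phi(x)] - \int q_\phi(x)\log p_\theta(x)\,dx \\
&= -H[q_\phi(x)] - \int q_\phi(x)\big(u_\theta(x) - \log Z(\theta)\big)\,dx \\
&= -H[q_\phi(x)] - E_{q_\phi(x)}[u_\theta(x)] + \log Z(\theta),
\end{aligned}
\end{displaymath}
where the last line uses $\int q_\phi(x)\,dx = 1$ to pull out $\log Z(\theta)$. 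Rearranging gives exactly the needed representation of $\log Z(\theta)$. Then $\log p_\theta(\tilde x) = u_\theta(\tilde x) - \log Z(\theta) = u_\theta(\tilde x) - E_{q_\phi(x)}[u_\theta(x)] - H[q_\phi(x)] - KL(q_\phi(x)\,\|\,p_\theta(x)) = \mathcal{U}(\tilde x;\theta,\phi) - KL(q_\phi(x)\,\|\,p_\theta(x))$, and since $KL(q_\phi(x)\,\|\,p_\theta(x)) \ge 0$ the bound $\log p_\theta(\tilde x) \le \mathcal{U}(\tilde x;\theta,\phi)$ holds.

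There is no real obstacle here — the argument is a routine rearrangement of the KL divergence together with the normalization identity, essentially the standard evidence-bound manipulation but written in the ``exclusive'' direction $KL(q_\phi\,\|\,p_\theta)$ rather than the usual ELBO direction. The only points requiring a word of care are: (i) noting that $H[q_\phi(x)]$ is the differential entropy $-\int q_\phi(x)\log q_\phi(x)\,dx$, consistent with the notation used elsewhere in the paper; (ii) tacitly assuming the relevant integrals are finite (in particular that $q_\phi$ has a density and $E_{q_\phi}[u_\theta]$ and $H[q_\phi]$ exist), which is implicit in the generator definition of Eq.~(\ref{eq:generator}); and (iii) observing that $\mathcal{U}(\tilde x;\theta,\phi)$ as defined is precisely $u_\theta(\tilde x) - E_{q_\phi(x)}[u_\theta(x)] - H[q_\phi(x)]$, so the gradient of this bound w.r.t.\ $\phi$ recovers the exclusive-divergence minimization claim discussed in the related-work section.
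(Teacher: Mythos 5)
Your proof is correct, and it rests on the same underlying fact as the paper's: the variational representation of $\log Z(\theta)$ with respect to $q_\phi$. The routes differ slightly in emphasis. The paper writes $\log Z(\theta) = \log \int q_\phi(x)\,\frac{\exp(u_\theta(x))}{q_\phi(x)}\,dx$ and applies Jensen's inequality, which yields the bound $\log p_\theta(\tilde x) \le \mathcal{U}(\tilde x;\theta,\phi)$ directly but leaves the exact equality $\log p_\theta(\tilde x) = \mathcal{U}(\tilde x;\theta,\phi) - KL(q_\phi\,\|\,p_\theta)$ implicit (it is the usual ``Jensen gap equals KL'' identity, not spelled out there). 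You instead expand $KL(q_\phi\,\|\,p_\theta)$ exactly, solve for $\log Z(\theta)$, and substitute, which establishes the equality in the first line of Eq.~(\ref{eq:exclusive-bound}) explicitly and then gets the inequality for free from $KL \ge 0$. In that sense your argument covers the full statement of the proposition slightly more completely than the paper's two-step Jensen argument, at the cost of no extra work; your side remarks on integrability and on the interpretation of $\mathcal{U}$ as the exclusive-divergence objective are consistent with the paper's discussion following the proposition.
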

	\begin{proof}
		Note that (i) $	log p_\theta(\tilde{x}) = u_\theta(\tilde{x}) - log Z(\theta)$, and (ii)
		we have the following lower bound on $Z(\theta)$: 
		$log Z(\theta) = log \int\exp(u_{\theta}(x))dx = log \int q_\phi(x) \frac{\exp(u_{\theta}(x))}{q_\phi(x)}dx \ge \int q_\phi(x) log \frac{\exp(u_{\theta}(x))}{q_\phi(x)}dx $. Combining (i) and (ii) gives Eq. (\ref{eq:exclusive-bound}).
	\end{proof}
	Furthermore, it can be seen that learning in \cite{Kim2016DeepDG} amounts to optimizing the following evidence upper bound:
	\begin{displaymath}
	\max_{\theta} \min_{\phi} \mathcal{U}(\tilde{x};\theta,\phi),
	\end{displaymath}
	which is unfortunately not revealed in this manner in \cite{Kim2016DeepDG}.
	
	\section{Connection between inclusive-NRFs and GANs}
	\label{sec:connection-NRF-GAN}
	
	Note that for the generator as defined in Eq. (\ref{eq:generator}), we have the following joint density
	\begin{displaymath}
	logq_\phi(x,h) = - \frac{1}{2 \sigma^2} || x - g_\phi(h) ||^2 + constant.
	\end{displaymath}
	The generator parameter $\phi$ is updated according to the gradient in Eq. (\ref{eq:jrf_unsup_gradient}), which is rewritten as follows:
	\begin{displaymath}
	\label{eq:grad_phi}
	\nabla_\phi = E_{p_\theta(x) q_\phi(h|x)}\left[ \nabla_\phi  logq_\phi(x,h)\right]
	\end{displaymath}
	
	Specifically, we draw $(h',x') \sim q_\phi$ and then perform one-step SGLD to obtain $(h,x)$. To simply the analysis of the connection, suppose $h \approx h'$, $x' \approx g_\phi(h') \approx g_\phi(h) $. Then we have
	\begin{equation}\label{eq:SGLD-one-step}
	\begin{split}
&x = x'+\delta_1 \left.\left[ \frac{\partial}{\partial x} log p_\theta(x) \right] \right|_{x=x'}   +\sqrt{2\delta_1} \eta^{(1)},\\
&\quad \eta^{(1)} \sim \mathcal{N}(0,I)
	\end{split}
	\end{equation}
which further gives
	\begin{displaymath} 
	\begin{split}
	&x - g_\phi(h) \approx \delta_1 \left.\left[ \frac{\partial}{\partial x} log p_\theta(x) \right] \right| _{x=g_\phi(h)} \\
	=& \delta_1 \left.\left[ \frac{\partial}{\partial x} u_\theta(x) \right] \right| _{x=g_\phi(h)} 
	\end{split}
	\end{displaymath}
	
	The gradient in the updating step in Algorithm \ref{alg:learning-NRF-IAG} becomes:
	\begin{displaymath}
	\begin{split}
	\nabla_\phi  logq_\phi(x,h) &= \frac{1}{\sigma^2} \left[ \frac{\partial}{\partial \phi} g_\phi(h) \right]  \left[ x - g_\phi(h) \right]\\
	&\approx \frac{1}{\sigma^2} \left[ \frac{\partial}{\partial \phi} g_\phi(h) \right]  \delta_1 \left.\left[ \frac{\partial}{\partial x} u_\theta(x) \right] \right|_{x=g_\phi(h)}\\
	&=\frac{1}{\sigma^2} \delta_1 \left[ \frac{\partial}{\partial \phi} u_\theta( g_\phi(h) ) \right]
	\end{split}
	\end{displaymath}
	where $\frac{\partial}{\partial \phi} g_\phi(h)$ is a matrix of size $dim(\phi) \times dim(x)$.
	Therefore, the inclusive-NRF Algorithm \ref{alg:learning-NRF-IAG} can be viewed to perform  the following steps:
	\begin{enumerate}
		\item Draw an empirical example $\tilde{x} \sim \tilde{p}(\tilde{x})$.
		\item Draw $h \sim p(h)$, $x' = g_\phi(h)$, and generate $x$ by one-step-gradient according to Eq. (\ref{eq:SGLD-one-step}).
		\item Update $\theta$ by ascending: $\nabla_\theta u_\theta(\tilde{x}) - \nabla_\theta u_\theta(x)$.
		\item Update $\phi$ by descending: $-\frac{\partial}{\partial \phi} u_\theta( g_\phi(h) )$.
	\end{enumerate}
	
	Now suppose that we interpret the potential function $u_\theta(x)$ as the discriminator in GANs (or the critic in Wasserstein GANs), which assign high scalar scores to empirical samples $\tilde{x} \sim \tilde{p}(\tilde{x})$ and low scalar scores to generated samples $x$.
	Then, the inclusive-NRF training could be viewed as playing a two-player minimax game:
	\begin{equation} \label{eq:NRF-GAN-obj}
	\min_\phi \max_{\theta} E_{\tilde{x} \sim p_0} \left[u_\theta(\tilde{x})\right]
	- E_{h \sim p(h)} \left[  u_\theta(g_{\phi}(h)) \right],
	\end{equation}
	\emph{except that} in optimizing $\theta$, the generated sample are further revised by taking one-step-gradient of $u_\theta(x)$ w.r.t. $x$ (as shown in the above Step 2.
	The `discriminator' $u_\theta$ is trained to discriminate between empirical samples and generated samples, while the generator $q_\phi$ is trained to fool the discriminator by assigning higher scores to generated samples.
	From the above analysis, we find some interesting connections between inclusive-NRFs and existing studies in GANs.
	\begin{itemize}
		\item  The optimization shown in Eq. (\ref{eq:NRF-GAN-obj}) is in fact the same as that in Wasserstein GANs (Theorem 3 in \cite{arjovsky2017wasserstein}), \emph{except that} in Wasserstein GANs, the critic $u_\theta(x)$ is constrained to be 1-Lipschitz continuous.
		\item To optimize $\theta$, the generated sample is obtained by taking one-step-gradient of $u_\theta(x)$ w.r.t. $x$.
		The tiny perturbation guided by the gradient to increase the score for the generated sample in fact creates an adversarial example.
		A similar idea is presented in \cite{liu2018adversarial} that when feeding real samples to the discriminator, 5 steps of PGD  (Projected Gradient Descent) attack is taken to decrease the score to create adversarial samples. 
		It is shown in \cite{liu2018adversarial} that training the discriminator with adversarial examples significantly improves the GAN traning.
		\item The above analysis assume the use of one-step SGLD. It can be seen that running finite steps of SGLD in sample revision in fact create adversarial samples to fool the discriminator.
	\end{itemize} 
	
	\section{Details of experiments}
	\label{sec:exp-details}
	
	\subsection{Regularization}
	\label{sec:reg-loss}
	
	Apart from the basic loss as shown in Eq.(\ref{eq:jrf_unsup_gradient}), the following potential control loss is empirically found to stabilize the training of inclusive-NRFs.
	For random fields, the data log-likelihood $log p_\theta(\tilde x)$ is determined relatively by the potential value $u_\theta(\tilde x)$. 
	To avoid the potential values not to increase unreasonably, we could control the squared potential values, by minimizing:
	\begin{displaymath}
	\label{eq:Ls}
	L_p(\theta) =  {{E_{\tilde p(\tilde x)}}\left[ u_\theta(\tilde x) \right]^2}
	\end{displaymath}
	In this manner, the potential values would be attracted to zeros.
	In practice, we add stochastic gradients of $\alpha_p {L_p(\theta)}$ over minibatches to the original stochastic gradients of $\theta$ in Algorithm \ref{alg:learning-NRF-IAG}, with hyper-parameter $\alpha_p$.
	
	\subsection{Image generation on CIFAR-10}
	\label{sec:detail-cifar-10}
	
	\textbf{~~~Network architectures.} 
For a NRF and a GAN, each consists of two neural networks.
We use the same architecture for the potential $u_\theta$ in NRFs and the discriminator in GANs, and also the same architecture for the generators in both NRFs and GANs.
	Specifically, we use the network architectures in Table 
	3 of \cite{Miyato2018SpectralNF} for CNNs and Table 4 of \cite{Miyato2018SpectralNF} for ResNets.
		
	
	\textbf{Hyperparameters.} We use Adam optimizer with the hyperparameter ($\beta_1=0,\beta_2=0.9$ and $\alpha=0.0003$ for random fields, $\alpha=0.0001$ for generators).
	For sample revision in inclusive-NRFs, we empirically choose SGLD with $L=1$ ($\delta_l=0.0003$) and use spectral normalization.
	The weight for the potential control loss is $\alpha_p=0.1$.

	\textbf{Evaluation.} Figure \ref{fig:generate} show the generated samples from inclusive-NRFs.
	We calculate inception score (IS) and Frechet inception distance (FID) in the same way as in \cite{Miyato2018SpectralNF}.
	We trained 10 models with different random seeds, and then generate 5000 images 10 times and compute the average inception score and the standard deviation.
	We compute FID between the empirical distribution and the generated distribution empirically over 10000 (test set) and 5000 samples.
	
	\subsection{Ablation study of inclusive-NRFs on CIFAR-10}
	\label{sec:detail_ablation}
	We use the same networks as in Table 3 in \cite{Miyato2018SpectralNF} (standard CNNs), except that the spectral normalization is replaced by batch normalization, as discussed in Section \ref{sec:ablation}.
	We use Adam optimizer with the hyperparameter ($\alpha=0.0002,\beta_1=0,\beta_2=0.9$).
We use ($\delta_l=0.0001$) for SGLD, and ($\beta=0.5,\delta_l=0.0003$) for SGHMC.
	The weight for the potential control loss is $\alpha_p=0.1$.
	
	\subsection{Anomaly detection}
	\label{sec:detail_anomaly}
	For anomaly detection, we train inclusive-NRFs with Algorithm \ref{alg:learning-NRF-IAG} and the potential control loss.  
	The network architectures and hyperparameters for KDDCUP, MNIST and CIFAR-10 datasets are listed in Table \ref{tab:kddcup_ad}, \ref{tab:mnist_ad} and \ref{tab:cifar_ad}, respectively.
	For sample revision, we use SGLD ($\delta_l=0.003$) and SGHMC ($\beta=0.3, \delta=0.03$) for KDDCUP, SGLD ($\delta_l=0.001$) and SGHMC ($\beta=0.5,\delta_l=0.003$) for MNIST and SGLD ($\delta_l=0.03$) and SGHMC ($\beta=0.9, \delta=1$) for CIFAR-10.
	
	\section{Latent space interpolation} \label{sec:latent-space-interp}
	
	Figure \ref{fig:inp} shows that the auxiliary generator smoothly outputs transitional samples as the latent code $h$ moves linearly in the latent space. 
	The interpolated generation demonstrates that the model has indeed learned an abstract representation of the data.
	
	\begin{table}[h]
		\centering
		\caption{Network architectures and hyperparameters in the 2D GMM experiment for  both NRFs and GANs.
		}
		\label{tab:gmm-detail}
		\begin{tabular}{cc}
			\toprule
			\textbf{Potential/Discriminator}  &\textbf{Generator}\\ 
			\midrule
			Input 2-dim data  &Noise $h$ (2-dim)\\    
			MLP 100 units, Leaky ReLU, Weight norm&MLP 50 units, ReLU, Batch norm\\
			MLP 100 units, Leaky ReLU, Weight norm&MLP 50 units, ReLU, Batch norm\\
			MLP 1 unit, Linear, Weight norm&MLP 2 units, Linear\\
			\midrule
			Batch size&100\\
			Number of iterations& 160,000\\
			Leaky ReLU slope&0.2\\
			Learning rate&0.001\\
			Optimizer&Adam ($\beta_1=0.5, \beta_2=0.9$)\\
			\bottomrule
		\end{tabular}
	\end{table}

		\begin{table}[htb]
		\centering
		\caption{Network architectures and hyperparameters for anomaly detection on KDDCUP dataset.
		}
		\label{tab:kddcup_ad}
		\begin{tabular}{cc}
			\toprule
			\textbf{Potential}  &\textbf{Generator}\\ 
			\midrule
			Input 120-dim data  &Noise $h$ (5-dim)\\    
			MLP 60 units, Tanh, Weight norm&MLP 10 units, Tanh, Batch Norm\\
			MLP 30 units, Tanh, Weight norm&MLP 30 units, Tanh, Batch Norm\\
			MLP 10 units, Tanh, Weight norm&MLP 60 units, Tanh, Batch Norm\\
			MLP 1 unit, Linear, Weight norm&MLP 120 unit, Linear, Weight norm\\
			\midrule
			Batch size&1024\\
			Number of epochs& 30\\
			Learning rate&0.0001 fro RF, 0.0003 for G\\
			Optimizer&Adam ($\beta_1=0.5, \beta_2=0.999$)\\
			Sample revision steps &$L=10$\\
			$\alpha_p$ & 0.1\\
			\bottomrule
		\end{tabular}
	\end{table}

	\begin{table*}[htb]
	\centering 
	\caption{Network architectures and hyperparameters for anomaly detection on MNIST dataset.}
	\label{tab:mnist_ad}
	\begin{tabular}{cc} 
		\toprule
		\textbf{Potential}  &\textbf{Generator}\\  
		\midrule
		Input $28\times28$ Gray Image &Noise $h$ (100-dim)\\    
		MLP 1000 units, Leaky ReLU, Weight norm &MLP 500 units, Sotfplus, Batch norm\\
		MLP 500 units, Leaky ReLU, Weight norm &MLP 500 units, Sotfplus, Batch norm\\
		MLP 250 units, Leaky ReLU, Weight norm &MLP 784 units, Sigmoid\\
		MLP 250 units, Leaky ReLU, Weight norm &\\
		MLP 250 units, Leaky ReLU, Weight norm &\\
		MLP 1 units, Linear, Weight norm &\\
		\midrule
		Batch size&100\\
		Number of epochs& 50\\
		Leaky ReLU slope&0.2\\
		Learning rate&0.003 for RF, 0.001 for G\\
		Optimizer&Adam ($\beta_1=0.0, \beta_2=0.9$)\\
		Sample revision steps &$L=20$\\
		$\alpha_p$ & 1\\
		\bottomrule
	\end{tabular}
\end{table*}

\begin{table*}[htb]
	\centering  
	\caption{Network architectures and hyperparameters for anomaly detection on CIFAR-10 dataset.}
	\label{tab:cifar_ad}
	\begin{tabular}{cc} 
		\toprule
		\textbf{Potential}  &\textbf{Generator}\\ 
		\midrule
		Input $32\times32$ Color Image &Noise $h$ (100-dim)\\    
		$3\times3$ conv. 96, Leaky ReLU, Weight norm &MLP 8192 units, ReLU, batch norm\\
		$3\times3$ conv. 96, Leaky ReLU, Weight norm &Reshape $512\times4\times4$\\
		$3\times3$ conv. 96, stride=2, Leaky ReLU, Weight norm &$5\times5$ deconv. 256, ReLU, Stride=2\\
		$3\times3$ conv. 192, Leaky ReLU, Weight norm &$5\times5$ deconv. 128 ReLU, stride=2\\
		$3\times3$ conv. 192, Leaky ReLU, Weight norm &$5\times5$ deconv. 3, Tanh, Stride=2\\
		$3\times3$ conv. 192, stride=2, Leaky ReLU, Weight norm &\\
		$3\times3$ conv. 192, Leaky ReLU, Weight norm &\\
		$1\times1$ conv. 192, Leaky ReLU, Weight norm &\\
		$1\times1$ conv. 192, Leaky ReLU, Weight norm &\\
		MLP 1 units, Linear, Weight norm &\\
		\midrule
		Batch size&64\\
		Number of epochs& 100\\
		Leaky ReLU slope&0.2\\
		Learning rate&0.001\\
		Optimizer&Adam ($\beta_1=0.0, \beta_2=0.9$)\\
		Sample revision steps &$L=10$\\
		$\alpha_p$ & 0.1\\	
		\bottomrule
	\end{tabular}
\end{table*}
		\begin{figure*}
		
		\centering  
		\includegraphics[width=0.8\textwidth]{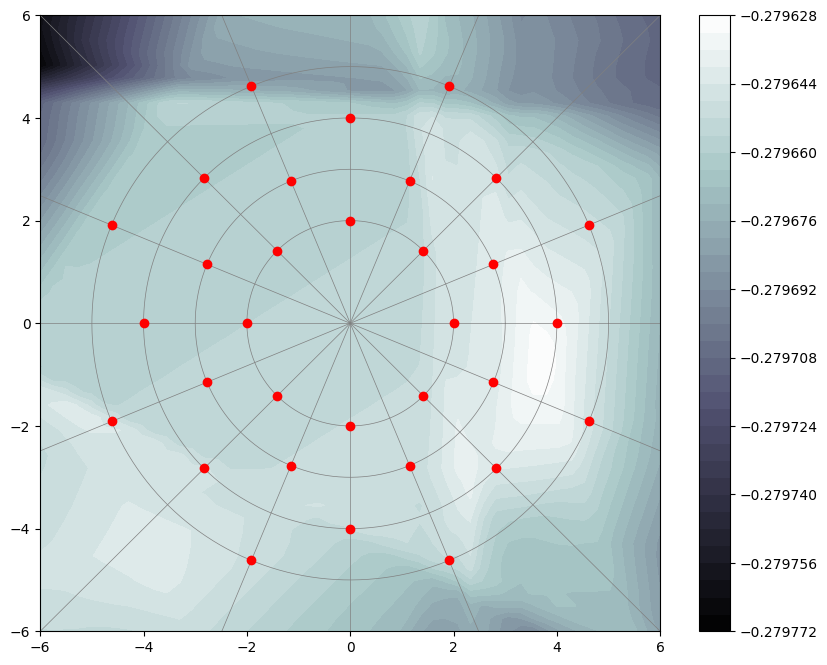}    
		
		\caption{The learned NRF potential $u_\theta(x)$ from applying the contrastive divergence (CD) algorithm over the GMM synthetic data.
We encourage readers to compare with (g) and (h) in Figure \ref{fig:toy}, which are learned potentials from exclusive and inclusive NRFs respectively.}
		\label{fig:CD-potential-map}
	\end{figure*}

	\begin{figure*}

		\centering  
		\includegraphics[width=0.8\textwidth]{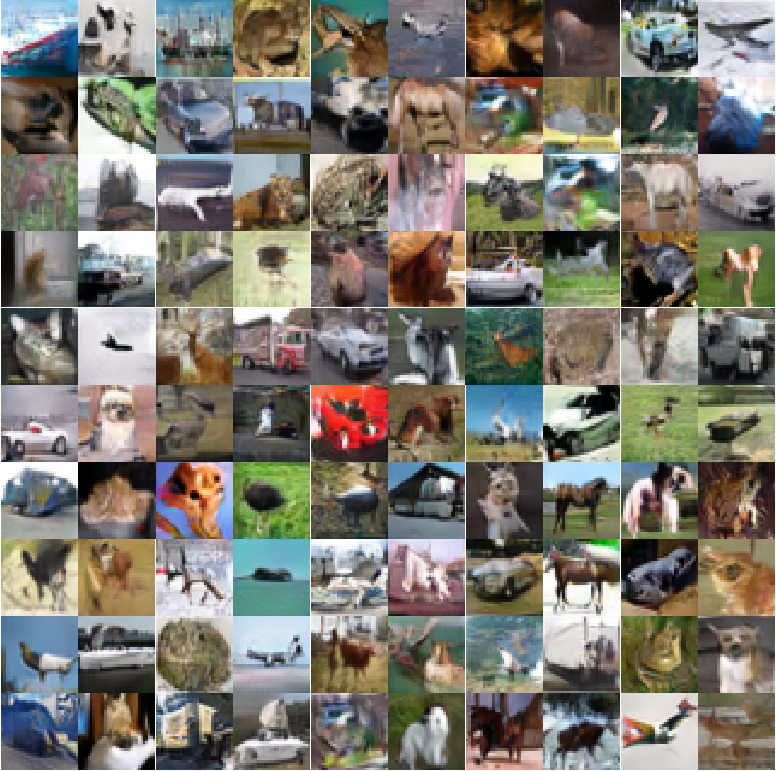}    
  
		\caption{Generated samples from inclusive-NRFs on CIFAR-10.}
		\label{fig:generate}
	\end{figure*}

\begin{figure*}[htb]
	\centering  
	\includegraphics[width=0.8\textwidth]{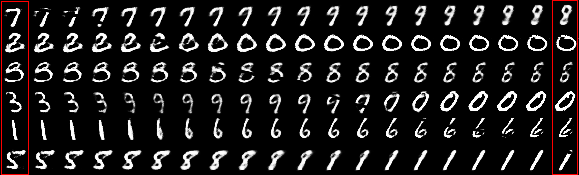}  
	\caption{Latent space interpolation with inclusive-NRFs on MNIST.
		The leftmost and rightmost columns are from stochastic generations $x_1$ with latent code $h_1$ and $x_2$ with $h_2$, respectively.
		The columns in between correspond to the generations from the latent codes interpolated linearly from $h_1$ to $h_2$.
	} 
	\label{fig:inp}
\end{figure*}  
	
	\medskip
	
	\small
	
\end{document}